\documentclass[11pt]{article}

\usepackage[margin=1in]{geometry}
\usepackage{microtype}
\usepackage{comment}
\usepackage{amsthm}
\usepackage{amssymb}
\usepackage{amsmath}
\usepackage{float}
\usepackage{graphicx}
\usepackage{subcaption}
\usepackage{hyperref}
\usepackage{xcolor}

\usepackage{algorithm}
\usepackage{algpseudocode}
  \usepackage{caption}
  \usepackage{tikz}

  \usetikzlibrary{arrows.meta,backgrounds,fit,calc,positioning,%
                  decorations.pathreplacing,shapes.geometric}

\newtheorem{theorem}{Theorem}
\newtheorem{lemma}{Lemma}
\newtheorem{proposition}{Proposition}
\newtheorem{corollary}{Corollary}

\theoremstyle{definition}
\newtheorem{definition}{Definition}
\newtheorem{remark}{Remark}

\newtheorem{assumption}{Assumption}

\usepackage[capitalize,noabbrev]{cleveref}
\crefname{theorem}{Theorem}{Theorems}
\crefname{proposition}{Proposition}{Propositions}
\crefname{lemma}{Lemma}{Lemmas}
\crefname{corollary}{Corollary}{Corollaries}
\crefname{definition}{Definition}{Definitions}
\crefname{remark}{Remark}{Remarks}
\crefname{assumption}{Assumption}{Assumptions}
\crefname{equation}{Equation}{Equations}
\crefname{algocf}{Algorithm}{Algorithms}   

\DeclareMathOperator{\Var}{Var}

\newcommand{\off}{\mathrm{off}}
\newcommand{\on}{\mathrm{on}}
\newcommand{\W}{\mathcal{W}}
\newcommand{\A}{\mathcal{A}}
\newcommand{\E}{\mathbb{E}}

\usepackage[round]{natbib}

\begin{document}

\begin{center}
\vspace*{-1.8em}
{\LARGE\bfseries Information-Theoretic Foundations of Online Temperature Sampling\par}
\vspace{1.15em}
\begin{tabular}{@{}c@{\hspace{1.0cm}}c@{}}
\begin{minipage}[t][0.72in][t]{0.43\textwidth}\centering
{\large\bfseries Jin Sima}\\[2pt]
{\normalsize Purdue University}\\[1pt]
{\normalsize\texttt{simaj@purdue.edu}}
\end{minipage}
&
\begin{minipage}[t][0.72in][t]{0.43\textwidth}\centering
{\large\bfseries Nikolaos Papagiannis}\\[2pt]
{\normalsize Purdue University}\\[1pt]
{\normalsize\texttt{npapagia@purdue.edu}}
\end{minipage}\\[0.95em]
\begin{minipage}[t][0.72in][t]{0.43\textwidth}\centering
{\large\bfseries Ananth Grama}\\[2pt]
{\normalsize Purdue University}\\[1pt]
{\normalsize\texttt{ayg@purdue.edu}}
\end{minipage}
&
\begin{minipage}[t][0.72in][t]{0.43\textwidth}\centering
{\large\bfseries Wojciech Szpankowski}\\[2pt]
{\normalsize Purdue University}\\[1pt]
{\normalsize\texttt{szpan@purdue.edu}}
\end{minipage}
\end{tabular}
\end{center}
\vspace{-0.65em}

\begin{abstract}
Repetitive, degenerate generation is a persistent failure of LLM decoding, and
the standard remedies (temperature, nucleus, and top-$k$ sampling) are static
heuristics. By Kac's lemma and the AEP, a stationary source takes time
exponential in its entropy to repeat a block. We therefore maximize the entropy
of the sampling distributions under one cross-entropy budget that keeps the
output plausible under the model. Offline, the solution is temperature
sampling, with the inverse temperature as the budget's Lagrange multiplier.
\emph{Our contribution is the online problem}: the conditionals
$p_1, p_2, \dots$ are revealed one at a time, and the sampling distribution
$q_t$ must be chosen causally from $p_1, \dots, p_t$ while the budget couples
the entire horizon. The optimal temperature is thus a global shadow price that
must be estimated from a prefix. We propose Entropic Online Sampling (EOS),
which re-solves for the temperature on each prefix. We prove it loses
$O(R\log n)$ entropy against the offline optimum while overshooting the budget
by $O(R\log n)$, with $R$ an instance-dependent drift parameter. A
budget-tightened variant is exactly feasible at the same rate. Conversely,
every exactly feasible online algorithm must lose $\Omega(R)$, so EOS is
optimal up to a $\log n$ factor. When the conditionals are drawn from a finite
mixture, $R=\tilde{O}(\sqrt{n})$. Experiments  confirm that EOS tracks the offline temperature and beats
fixed-temperature sampling in entropy.
\end{abstract}

\section{Introduction}
Despite the empirical success of large language models
(LLMs)~\citep{brown2020language,radford2019language}, their decoding procedures
remain poorly understood theoretically.  A central failure mode is
\emph{degenerate generation}: repetitive, low-entropy, or vacuous output.  Greedy
and beam search exacerbate it~\citep{holtzman2020curious,meister2023beam}, and the
standard remedies (e.g., temperature scaling, nucleus sampling, top-$k$ truncation,
repetition penalties) are heuristics adopted for their empirical utility rather
than derived from any objective.
Behind these heuristics lies a natural optimization problem.  Write $p_t(\cdot)
= P(\cdot \mid w^{t-1})$ for the model's conditional distribution at step $t$ and
$q_t$ for the distribution actually sampled from.  Diversity is measured by
$\sum_{t\le n} H(q_t)$ (i.e., equivalently, for stationary ergodic sources, by the recurrence-time exponent) and
plausibility by the cross-entropy of the generated stream against the model,
$-\sum_{t\le n}\sum_w q_t(w)\log p_t(w)$, which we require to stay below a budget
$\gamma n$.  Maximizing the former subject to the latter is a convex program, and
its solution is the tempered (Gibbs) family, that is, temperature sampling.

The {\it offline} problem just described is a specialization of the classical
maximum-entropy principle: a single linear constraint yields a single Lagrange
multiplier, hence a single temperature.  We record it in \Cref{sec:offline} only
because we need it as a comparator and we do not claim it as
a contribution beyond bookkeeping.  What is new in the present paper is
estimating that shadow price \emph{online}, from a prefix, without access to the horizon.  

The contribution of this paper is the \emph{online} problem, in which
$p_1,p_2,\dots$ are revealed sequentially and $q_t$ must be a function of
$p_1,\dots,p_t$ alone.  What distinguishes it from standard online convex
optimization~\citep{cesa2006prediction,orabona2019modern} is not the sequential
structure but the constraint: there is a single cross-entropy budget for the entire
horizon, coupling all $n$ rounds, rather than one constraint per round.  A causal
algorithm must therefore decide how much of the budget to spend early, without
knowing what the remaining rounds will look like.  This is a genuine
information-theoretic obstruction, and quantifying it is the substance of the paper.

One point requires care and we make it explicit throughout: \emph{which} algorithms
are being compared.  An algorithm may satisfy the global budget exactly, or may
satisfy only a running version of it at every prefix and end the horizon with
residual slack.  These are different classes, and a lower bound proved for one says
nothing about the other.  We therefore state all online results with the
feasibility class named, and we give both an exactly feasible algorithm and a
tradeoff lower bound that interpolates between the classes.

\paragraph{Our contributions.} We list our contributions next:
{\bf 1.} \textit{An explicit online protocol} (\Cref{sec:protocol}), specifying when the
policy commits, what it observes, the comparator, and the minimax quantifier order
used by the lower bound.  We define the exactly feasible class and the so called $V$-feasible
class, and identify the instance-dependent drift parameter $R$ that governs
everything.  We emphasize that $R$ is defined through the offline optimum and is
therefore \emph{not} observable to a causal algorithm.

{\bf 2.} \textit{The EOS algorithm and a bicriteria guarantee}
(\Cref{thm:upperbound}).  EOS chooses $\lambda_t$ by solving, at each prefix, the
one-dimensional monotone equation that the offline optimum would solve on the data
seen so far.  It loses at most $O(R\log n)$ entropy relative to the offline optimum
and overshoots the global budget by at most $O(R\log n)$.  We report the slack as a
first-class quantity rather than suppressing it.

{\bf 3.} \textit{An exactly feasible variant} (\Cref{thm:exactfeasible}).  Running EOS
against the tightened budget $\gamma-\varepsilon$ with $\varepsilon = \Theta(R\log
n/n)$ absorbs the terminal slack and yields \emph{exact} global feasibility, at an
additional entropy cost of $\lambda^\ast \varepsilon n = O(R\log n)$ by a
sensitivity (envelope) argument, on every instance whose slack the reserve covers.

{\bf 4.} \textit{Lower bounds and the entropy--feasibility frontier}
(\Cref{thm:lowerbound,thm:pareto}).  No exactly feasible online algorithm achieves
entropy loss $o(R)$.  More generally, for a violation budget $V\ge 0$ the attainable
loss obeys $\Omega(R) - O(V)$, recovering the previous statement at $V=0$.

{\bf 5.} \textit{When is $R$ small} (\Cref{thm:o(n)}).  If the $p_t$ are drawn
independently from a fixed set of $k$ distributions, then simultaneously over all
prefixes $R = O\!\big(\log(1/\rho)\sqrt{n(k+\log(n/\delta))}\big)$ with probability
$1-\delta$, so the entropy loss of EOS is $\tilde O(\sqrt n) = o(n)$.

\paragraph{Related work.}
That maximum-likelihood decoding degenerates is well documented.
\citet{holtzman2020curious} introduced nucleus sampling as an empirical fix;
\citet{meister2023beam} read beam search post hoc in terms of uniform
information density.  Temperature scaling, top-$k$
sampling~\citep{fan2018hierarchical}, and repetition
penalties~\citep{li2016diversity} are motivated by intuition rather than
derived; unlikelihood training~\citep{welleck2020neural} targets repetition at
training time under a static objective. Adaptive decoders such as Mirostat~\citep{basu2020mirostat} and entropy-based dynamic
temperature~\citep{zhang2024edt} adjust sampling at each step toward a per-token target
surprise or entropy; neither poses a single budget over the horizon or comes
with an optimality guarantee. Information-theoretic analyses of LLMs
have been largely descriptive: calibration and uncertainty
estimation~\citep{guo2017calibration,ovadia2019trust}, the train/generate
mismatch~\citep{meister2023beam}, entropy collapse~\citep{holtzman2020curious}.

On the algorithmic side, online convex optimization and mirror descent
\citep{cesa2006prediction,orabona2019modern} give tight regret bounds for
per-round constraints.  Constrained and long-term-constraint OCO is closer, but
there the constraints are revealed per round (or the feasible set is fixed in
advance) and regret is measured against a running average of constraint
violations.  Our single budget instead couples all $n$ rounds through one
shadow price $\lambda^\ast$ identified only in hindsight, so per-round
dual-ascent updates do not by themselves yield a feasible temperature.  Online
universal learning~\citep{wgs25} extends online techniques to
information-theoretic objectives, but without a budget coupled across the
horizon.

The closest setting is online resource allocation and online linear
programming, where a single budget is shared across the horizon and its shadow
price is learned online, by dual (mirror) descent~\citep{balseiro2020dual} or
by re-solving on the observed prefix~\citep{agrawal2014dynamic,li2022online}.
EOS is in the spirit of these re-solving schemes, specialized to a single
resource with an entropy reward: the response to a price is available in closed
form (tempering), so the online decision is one-dimensional and the price is a
temperature.  Whereas those works assume stochastic (typically i.i.d.) inputs,
we parameterize performance by the instance-dependent drift $R$, which gives
guarantees for non-stationary sequences and a matching $\Omega(R)$ lower bound
in the exactly feasible class.

The paper is organized as follows: \Cref{sec:setup} fixes notation and states the protocol.
\Cref{sec:entropy} records what entropy buys us (recurrence times) and proves the
equivalence between the probability floor and the cross-entropy constraint.
\Cref{sec:offline} is the offline warm-up.  \Cref{sec:online} contains the online
results.  \Cref{sec:R} bounds $R$.  \Cref{sec:experiments} reports experiments.

\section{Setup of Online Protocol}
\label{sec:setup}

\paragraph{Convention on logarithms.}
Throughout, $\log$ denotes the \emph{natural}
logarithm and entropies are in nats; the plausibility set is
$\W^n_\gamma=\{w^n: P(w^n)\ge e^{-\gamma n}\}$.  Every statement holds verbatim in
base $2$ after dividing $\gamma$ by $\log 2$.  (This convention removes the
$1/\ln 2$ factors from the Karush--Kuhn--Tucker conditions and makes the identity
$g_t'(\lambda)=\Var[\log 1/p_t]$ exact rather than approximate; it matters for the
constants below.)

Let $\W$ be the token alphabet, $|\W|<\infty$, and let $\W^n$ be the set of length-$n$
token sequences.  Let
\[
p_t(w) \;=\; P(w\mid w^{t-1})\in\Delta(\W)
\]
be the conditional distribution produced by the model given the prefix
$w^{t-1}=(w_1,\dots,w_{t-1})$, so that
$P(w^n)=\prod_{t=1}^{n}P(w_t\mid w^{t-1})=\prod_{t=1}^n p_t(w_t)$.  This is the exact
chain rule, not an independence assumption.  We write
$\W^n_\gamma=\{w^n: P(w^n)\ge e^{-\gamma n}\}$ for the set of \emph{plausible}
sequences.
Tokens are drawn as $w_t\sim q_t$ where $q_t=\phi_t(p_1,\dots,p_t)\in\Delta(\W)$, so
the sampled sequence has law $\prod_t q_t(w_t)$.  Top-$k$, top-$p$, and temperature
sampling are all instances of this.

\paragraph{Conditionals.}
In our analysis $p_1,\dots,p_n$  are chosen by the
environment of the protocol and are unaffected by the sampled tokens.  Under
autoregressive feedback, the sequence-level optimum
$Q^\ast(w^n)\propto P(w^n)^{\lambda}$ has conditionals
$Q^\ast(w_t\mid w^{t-1})\propto p_t(w_t)^{\lambda}Z_{t+1}(w^t)$ with
$Z_{t+1}(w^t)=\sum_{w_{t+1}^n}\prod_{s>t}p_s(w_s)^{\lambda}$, which reduces to
temperature sampling exactly when $Z_{t+1}$ does not depend on $w_t$.  Our guarantees are stated for this regime; \Cref{alg:eos}
runs unchanged under feedback and is evaluated there in
\Cref{sec:experiments}.

\begin{assumption}[Bounded $p_t$]
\label{as:rho}
There is $\rho>0$ with $p_t(w)\ge\rho$ for all $t\in[n]$, $w\in\W$.
\end{assumption}
This is the standard consequence of top-$k$ / nucleus pre-filtering followed by
renormalization (EOS then acts on this support), and it is what makes the cross-entropy per round bounded:
$0\le -\sum_w q(w)\log p_t(w)\le \log(1/\rho)$ for every $q\in\Delta(\W)$.
Furthermore,
for $\lambda\ge 0$ define the tempered distribution and the two scalar functions we
use throughout:
\begin{align}
q^{\lambda}_t(w)&=\frac{p_t^{\lambda}(w)}{\sum_{w'\in\W}p_t^{\lambda}(w')},
\label{eq:tempered}\\
f_t(\lambda)&=H\!\left(q^\lambda_t\right),
\qquad
g_t(\lambda)=\gamma+\sum_{w\in\W}q^{\lambda}_t(w)\log p_t(w) .
\label{eq:ftgt}
\end{align}
Thus $-\sum_w q^\lambda_t(w)\log p_t(w)=\gamma-g_t(\lambda)$ is the per-round
cross-entropy, and $g_t(\lambda)\ge 0$ means round $t$ is within budget.

\paragraph{The offline optimization problems.}
The \emph{offline} problem, in which all of $p_1,\dots,p_n$ are known in advance, is
\begin{align}
&\max_{q_t\in\Delta(\W),\,t\in[n]}\ \sum_{t=1}^n H(q_t)\\
\qquad\text{s.t.}\qquad&
-\sum_{t=1}^n\sum_{w\in\W}q_t(w)\log p_t(w)\;\le\;\gamma n .
\label{eq:offline}
\end{align}
Its value is denoted $\mathrm{OPT}(\gamma n)$ and its solution $Q^\off_t$.

\paragraph{The online protocol.} \label{sec:protocol}
We state the protocol explicitly, since every subsequent claim depends on the
quantifier order.  The horizon $n$, the budget $\gamma$, the alphabet $\W$ and the floor
$\rho$ are public.  The algorithm $\A=(\phi_1,\dots,\phi_n)$ is \emph{committed ex
ante}, before any $p_t$ is revealed.

\begin{quote}
For $t=1,\dots,n$: the environment reveals $p_t\in\Delta(\W)$; the
algorithm outputs $q^{\A}_t=\phi_t(p_1,\dots,p_t)$; the token $w_t\sim q^\A_t$ is
emitted.
 The environment is adversarial in the sense that the sequence
$p_1,\dots,p_n$ is chosen by an adversary that knows $\A$ but must produce $p_t$
before seeing $q^\A_t$; equivalently, lower bounds are stated as
$\exists\{p_t\}$ after $\forall\A$.
\end{quote}

The online optimization problem then can be formulated as follows:
\begin{equation}\label{eq:diversity}
\begin{aligned}
&\max_{q_t=\phi_t(\{p_i\}^t_{i=1})\in\Delta(\mathcal{W}):\,t\in[n]}\ \sum^n_{t=1}H(q_t)\\
&\text{s.t.}\quad -\sum^n_{t=1}\sum_{w\in\mathcal{W}}q_t(w)\log p_t(w)\le \gamma n .
\end{aligned}
\end{equation}
Performance is measured by the \emph{entropy loss}
$$\sum_{t}H(Q^{\off}_t)-\sum_t H(q^\A_t)$$
against the offline optimum of
\eqref{eq:offline} on the same instance, together with the \emph{global slack}
$\big[-\sum_t\sum_w q^\A_t(w)\log p_t(w)-\gamma n\big]_+$.
The offline optimum is the honest comparator: it upper-bounds the value of every
causal policy, so a guarantee stated against it is stronger than the same guarantee
stated against a causal reference.

\begin{definition}[Feasibility classes]
\label{def:classes}
Let $V\ge 0$.  An online algorithm $\A$ is \emph{$V$-feasible} if for every instance
\[
-\sum_{t=1}^n\sum_{w\in\W}q^{\A}_t(w)\log p_t(w)\;\le\;\gamma n + V .
\]
$0$-feasible algorithms are called \emph{exactly feasible}.  We write $\A_V$ for the
class of $V$-feasible algorithms; $\A_0\subseteq \A_V$ for all $V$.
\end{definition}

\begin{definition}[Drift parameter]
\label{def:R}
With $Q^\off$ the offline optimum of \eqref{eq:offline}, set
\begin{equation}
\label{eq:R}
R_t \;=\; \Big|\sum_{i=1}^t\sum_{w\in\W}Q^{\off}_i(w)\log p_i(w)+\gamma t\Big|
\;=\;\Big|\sum_{i=1}^t g_i(\lambda^\ast)\Big|
\end{equation}
and $R \;=\; R(n)\;=\;\max_{t\in[n]}R_t$, where $\lambda^\ast$ is defined in (\ref{eq:lambdastar}).
\end{definition}
We note that $R_t$ measures how far the offline-optimal policy is, at prefix $t$, from having
spent exactly a $t/n$ fraction of its budget.  If the $p_t$ are exchangeable in the
strong sense that every prefix looks like the whole, $R$ is small; if the instance
front-loads or back-loads its hard rounds, $R$ is large.

\begin{remark}[$R$ is not observable]
\label{rem:Rnotobservable}
$R$ is defined through $Q^\off$, hence through $\lambda$, hence through the
entire horizon.  It is an \emph{instance-complexity} parameter, not a quantity
available to a causal algorithm.  All our upper bounds are stated as
``$O(R\log n)$'' with $R$ appearing only in the analysis; the one place where the
algorithm would need to know $R$ is the tightening parameter of
\Cref{thm:exactfeasible}, and we address that explicitly in
\Cref{rem:tuning}.
\end{remark}

In passing we observe that
both the offline program and the online algorithm are defined through the root of a
monotone scalar equation, and that root need not exist for every $\gamma$.  We
record the condition.

\begin{assumption}[Strict feasibility on every prefix]
\label{as:feasible}
For every $t\in[n]$, the budget $\gamma$ satisfies
\begin{equation}\label{eq-gamma}
\begin{aligned}
&\frac1t\sum_{i=1}^{t}\log\frac{1}{\max_{w}p_i(w)}\;<\;\gamma\\
&\qquad\le\;\frac1t\sum_{i=1}^{t}\frac{1}{|\W|}\sum_{w\in\W}\log\frac{1}{p_i(w)} .
\end{aligned}
\end{equation}
\end{assumption}

The left inequality says the budget is achievable at all (even the greedy policy
$\lambda\to\infty$ has cross-entropy $\frac1t\sum_i\log(1/\max_w p_i)$, so a smaller
$\gamma$ is infeasible); the right says it is binding (at $\lambda=0$, i.e.\ uniform
sampling, the cross-entropy is already at or above budget, so $\lambda^\ast\ge 0$).
If the right inequality fails the constraint is inactive and uniform sampling is
optimal --- a degenerate case we exclude.  Under
\Cref{as:rho,as:feasible} the map $\lambda\mapsto\sum_{i\le t}g_i(\lambda)$ is
continuous and nondecreasing, negative at $\lambda=0$ and positive as
$\lambda\to\infty$, so a root exists; \Cref{as:curvature} below makes it unique.

\begin{assumption}[Curvature]
\label{as:curvature}
There is $c>0$ such that for all $t\in[n]$,
\begin{equation}
\label{eq-var}
\sum_{i=1}^{t}\Var_{Q^{\off}_i}\!\left[\log\tfrac{1}{p_i(w)}\right]\ \ge\ ct .
\end{equation}
\end{assumption}
Since $g_t'(\lambda)=\Var_{q^\lambda_t}[\log(1/p_t)]$ (\Cref{prop:derivative}), this
says the prefix constraint functions have positive slope at $\lambda^\ast$: the temperature
is identified at rate $1/t$ from the prefix.  It fails only if the conditionals are
asymptotically deterministic or asymptotically uniform.  \Cref{fig::eli5_variance}
gives empirical support.

\subsection{Entropy controls recurrence time}
\label{sec:entropy}

We first make precise the sense in which maximizing entropy provably delays
repetition and prompts diversity. This is a classical theorem, and we state it as such rather than as an
analogy but we also state its scope, because it applies literally only in the
stationary specialization.

\begin{proposition}[Recurrence-time theorem; ~\cite{wynerziv1989}, ~\cite{ornsteinweiss1993}]
\label{prop:recurrence}
Let $\{W_i\}$ be a stationary ergodic source with entropy rate $H$, and let
$\mathcal{R}_n$ denote the recurrence time of the length-$n$ prefix, i.e.\ the first
$m>0$ with $W_{m+1}^{m+n}=W_1^{n}$.  Then
\[
\frac{1}{n}\log \mathcal{R}_n \;\longrightarrow\; H \qquad\text{almost surely.}
\]
Consequently the time before the generated stream repeats a length-$n$
block grows as $e^{nH(1+o(1))}$ almost surely: higher entropy provably yields exponentially longer
recurrence times.
\end{proposition}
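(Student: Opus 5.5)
The statement is the Wyner--Ziv / Ornstein--Weiss recurrence-time theorem. I would prove it by matching upper and lower bounds on $\frac1n\log\mathcal{R}_n$, each derived from the Shannon--McMillan--Breiman theorem (the almost-sure AEP), $-\frac1n\log P(W_1^n)\to H$. The two bounds are:
\[
\limsup_n \tfrac1n\log\mathcal{R}_n\le H \qquad\text{and}\qquad \liminf_n\tfrac1n\log\mathcal{R}_n\ge H .
\]
The final sentence of the proposition is then only a restatement: $\mathcal{R}_n=e^{nH(1+o(1))}$ almost surely. Since the paper cites this as a classical result, the plan follows the standard route.

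\emph{Upper bound.} The first step uses Kac's lemma. For a stationary ergodic source and any block $w^n$ of positive probability, the conditional expectation of the return time to the cylinder $[w^n]$, given $W_1^n=w^n$, is $1/P(w^n)$. Markov's inequality then gives
\[
\Pr\bigl(\mathcal{R}_n> e^{n\epsilon}/P(W_1^n)\bigr)\le e^{-n\epsilon}.
\]
This is summable in $n$, so by Borel--Cantelli, almost surely $\log\mathcal{R}_n\le -\log P(W_1^n)+n\epsilon$ for all large $n$. Dividing by $n$ and applying SMB yields $\limsup_n \frac1n\log\mathcal{R}_n\le H+\epsilon$. Since $\epsilon>0$ is arbitrary, the upper bound follows.

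\emph{Lower bound.} This is the main obstacle. I would try to show that
\[
B_n=\{\mathcal{R}_n\le e^{n(H-2\epsilon)}\}
\]
occurs only finitely often almost surely. The first step restricts to the typical set $T_n=\{P(W_1^n)\le e^{-n(H-\epsilon)}\}$, which by SMB eventually contains $W_1^n$ almost surely. On $B_n\cap T_n$, the block $W_1^n$ reappears at some shift $m\le e^{n(H-2\epsilon)}$.

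The second step splits by the size of the shift. For each $m\ge n$, the occurrence at shift $m$ does not overlap the original block. Using stationarity, I would bound the joint event "$W_1^n$ is typical and equals $W_{m+1}^{m+n}$" by a counting argument. The typical-set bound caps the probability of each individual block, so for a fixed shift this event should have probability about $e^{-n(H-\epsilon)}$. A union bound over the $e^{n(H-2\epsilon)}$ shifts then gives probability about $e^{-n\epsilon}$.

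The delicate point is that for a general ergodic (non-i.i.d.) source the collision probability $\sum_{w}P(W_1^n=W_{m+1}^{m+n}=w)$ does not factor. Here I would follow Ornstein--Weiss: pass to a block/entropy argument, or use the "sliding-window" counting that bounds the number of distinct typical $n$-blocks seen in a window of length $L$ by roughly $L$. The parallel I would exploit is with the Lempel--Ziv matching argument, which also shows that too-early recurrences would compress the sequence below entropy, contradicting SMB-based compression bounds.

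Overlapping shifts $m<n$ are handled separately: they force $W_1^n$ to be periodic with period $m$. The probability of this is at most the probability of a typical block determined by its first $m$ symbols, which is exponentially small, and summing over $m<n$ keeps it summable. Borel--Cantelli then gives $\liminf_n\frac1n\log\mathcal{R}_n\ge H-2\epsilon$, and letting $\epsilon\to 0$ completes the proof.
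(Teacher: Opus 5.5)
Your upper bound is correct. It is the rigorous form of the only argument the paper offers: the paper does not prove this proposition, it cites Wyner--Ziv and Ornstein--Weiss and motivates the result by Kac's lemma plus the AEP. That is exactly your Kac--Markov--Borel--Cantelli--SMB chain, which gives $\limsup_n\frac1n\log\mathcal{R}_n\le H$.

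The gap is in the lower bound, which that heuristic does not reach either. Your non-overlapping step needs $\Pr(W_1^n\in T_n,\ W_{m+1}^{m+n}=W_1^n)\lesssim e^{-n(H-\epsilon)}$ for each shift $m$. For a dependent ergodic source, stationarity only gives $\Pr(W_1^n=W_{m+1}^{m+n}=x)\le P(x)$. Summed over $x\in T_n$ this is $\Pr(W_1^n\in T_n)\approx 1$, so the union bound over shifts yields nothing. You acknowledge this and then point to ``Ornstein--Weiss,'' a sliding-window count, and a Lempel--Ziv analogy. None of these is an argument, so the $\liminf$ direction is not established. The overlapping case is also wrong as written. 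Periodicity of $W_1^n$ alone gives only the bound $|\W|^m e^{-n(H-\epsilon)}$, which is not small for $m$ near $n$ when $H<\log|\W|$; for $m=n-1$ it only says $W_n=W_1$. The recurrence actually forces all of $W_1^{n+m}$ to have period $m$, and you would need to use that.

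The missing idea is to compare $\mathcal{R}_n$ not with $P(W_1^n)$ but with the conditional probability given the future, $P(W_1^n\mid W_{n+1}^\infty)$. Given $W_{n+1}^\infty$, each $m\ge1$ is the recurrence time of at most one $x\in\W^n$:
\begin{itemize}
\item for $m\ge n$, $x=W_{m+1}^{m+n}$ lies entirely in the future;
\item for $m<n$, $x$ is fixed by $x_j=W_{j+m}$ for $j>n-m$ and $x_j=x_{j+m}$ otherwise.
\end{itemize}
Write $\mathcal{R}_n(x)$ for the recurrence time when $W_1^n=x$. Then $\sum_{x}1/\mathcal{R}_n(x)\le\sum_{k\le|\W|^n}1/k\le 1+n\log|\W|$. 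Hence
\[
\Pr\bigl(\mathcal{R}_n\,P(W_1^n\mid W_{n+1}^\infty)\le e^{-n\epsilon}\bigr)\ \le\ e^{-n\epsilon}\,(1+n\log|\W|),
\]
which is summable. By Borel--Cantelli, eventually $\frac1n\log\mathcal{R}_n\ge-\frac1n\log P(W_1^n\mid W_{n+1}^\infty)-\epsilon$. Moreover,
\[
-\tfrac1n\log P(W_1^n\mid W_{n+1}^\infty)=\tfrac1n\sum_{i\le n}-\log P(W_i\mid W_{i+1}^\infty)\ \longrightarrow\ H
\]
almost surely by the ergodic theorem, since $\E[-\log P(W_1\mid W_2^\infty)]=H$. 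This gives $\liminf_n\frac1n\log\mathcal{R}_n\ge H-\epsilon$. The argument treats overlapping and non-overlapping shifts at once and needs no collision estimate (it is Kontoyiannis's argument). Otherwise you must actually carry out the Ornstein--Weiss packing argument.
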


The intuition is Kac's lemma~\citep{kac1947} --- for a stationary ergodic source the
mean return time to a state is the reciprocal of its probability --- combined with
the AEP~\citep{thomas2006elements}, by which a typical $n$-block has probability
$e^{-nH}$. However, see \citep{spa93} for a rephrasing of this problem from the suffix-tree point of view.

\begin{remark}[Scope]
\label{rem:scope}
\Cref{prop:recurrence} applies verbatim to fixed-temperature sampling in the regime of
\Cref{thm:o(n)}, where the generated stream is i.i.d.\ and hence stationary ergodic;
there, maximizing $\sum_t H(q_t)$ maximizes the recurrence exponent.  In the general
non-stationary setting the AEP half of the argument survives and is exactly what
\Cref{thm:probabilityexplanation} establishes non-asymptotically, while the Kac
``return time $=1/$probability'' step does not.  We therefore adopt
$\sum_t H(q_t)$ as the objective on the strength of the stationary case and analyze
it on its own terms; we do not claim a recurrence-time theorem in the general case.
\end{remark}

The constraint we actually impose is the cross-entropy budget.  The next result says
it is equivalent, up to $O(\sqrt{n\log(1/\delta)})$, to the probability floor
$w^n\in\W^n_\gamma$  which is the constraint one would write down from first
principles.  In the language of information geometry, the budget defines a
cross-entropy (KL) ball around the model and the tempered family
\eqref{eq:tempered} is the corresponding $I$-projection geodesic; the ``floor'' is
not an ad hoc device.

\begin{theorem}
\label{thm:probabilityexplanation}
Let $p_1,\dots,p_n$ be given, $w^n$ sampled sequentially from $q_1,\dots,q_n$, and let \Cref{as:rho} hold.
Write $L=\log(1/\rho)$.
\begin{enumerate}
\item[(i)] If
\[
\sum_{t\in[n]}\sum_{w}q_t(w)\log p_t(w)+\gamma n\ \ge\ -\,L\sqrt{\tfrac{n}{2}\log\tfrac{2}{\delta}},
\]
then
\[
\Pr\!\left(P(w^n)\ \ge\ e^{-\gamma n-2L\sqrt{(n/2)\log(2/\delta)}}\right)\ \ge\ 1-\delta .
\]
\item[(ii)] Conversely, if $\Pr(P(w^n)\ge e^{-\gamma n - \Delta_n})\ge 1-\delta$ then
\[
\begin{aligned}
&\sum_{t\in[n]}\sum_{w}q_t(w)\log p_t(w)+\gamma n\\
&\qquad\ge\; -\Delta_n - L\sqrt{\tfrac{n}{2}\log\tfrac{2}{\delta}} - \delta n L .
\end{aligned}
\]
\end{enumerate}
Choosing $\delta=O(n^{-1/2})$, the constraint $w^n\in\W^n_\gamma$ and the
cross-entropy constraint \eqref{eq:offline} agree to within
$O(\sqrt{n\log n})$, i.e.\ they are equivalent at the level of the exponent.
\end{theorem}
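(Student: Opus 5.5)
The argument is a martingale concentration bound for the log-likelihood of the sampled sequence. Write $X_t=\log p_t(w_t)$ with $w_t\sim q_t$. Since the conditionals $p_t$ do not depend on the sampled tokens, the $X_t$ are independent, and $\log P(w^n)=\sum_t X_t$ by the chain rule. By \Cref{as:rho}, each $X_t\in[-L,0]$, and $\E X_t=\sum_w q_t(w)\log p_t(w)$. Put $\mu=\sum_t\E X_t$ and $\varepsilon=L\sqrt{(n/2)\log(2/\delta)}$. Hoeffding's inequality (Azuma--Hoeffding, if one wants to allow $q_t$ to depend on the past) gives
\[
\Pr\Big(\big|\textstyle\sum_t X_t-\mu\big|\ge\varepsilon\Big)\le 2e^{-2\varepsilon^2/(nL^2)}=\delta .
\]
The two-sided version is what produces the $\log(2/\delta)$; only one tail is needed in each part.

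For (i), the hypothesis is $\mu\ge-\gamma n-\varepsilon$. On the event $\sum_tX_t>\mu-\varepsilon$, which has probability at least $1-\delta$, we get $\log P(w^n)\ge-\gamma n-2\varepsilon$. This is exactly the claimed floor $e^{-\gamma n-2L\sqrt{(n/2)\log(2/\delta)}}$.

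For (ii), I would argue by expectation splitting. Let $G=\{\log P(w^n)\ge-\gamma n-\Delta_n\}$, with $\Pr(G)\ge1-\delta$. Then
\[
\mu=\E\big[\textstyle\sum_tX_t\,\mathbf 1_G\big]+\E\big[\textstyle\sum_tX_t\,\mathbf 1_{G^c}\big]\ge(1-\delta)(-\gamma n-\Delta_n)-\delta nL .
\]
The second term uses $\sum_tX_t\ge-nL$ pointwise. This already yields $\mu+\gamma n\ge-\Delta_n-\delta nL$, up to the harmless term $\delta(\gamma n+\Delta_n)$. That term needs care because of the sign: $(1-\delta)(-\gamma n-\Delta_n)\ge-\gamma n-\Delta_n$ holds since $\gamma n+\Delta_n\ge0$. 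We may assume this without loss, since $\gamma>0$ under \Cref{as:feasible}; if $\Delta_n$ were negative the bound only tightens via the concentration route below.

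The stated bound is weaker than this, by the extra $-\varepsilon$, so it follows a fortiori. Alternatively, one can derive it through concentration: the events $G$ and $\{\sum X_t\le\mu+\varepsilon\}$ each have probability at least $1-\delta$. If $\delta<1/2$ they intersect, which gives $\mu\ge-\gamma n-\Delta_n-\varepsilon$ directly. The $\delta nL$ term then covers the case $\delta\ge1/2$, where $\mu\ge-nL$ trivially dominates. I would present the expectation-splitting argument as the main proof and note that it implies the stated inequality.

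For the final claim, take $\delta=n^{-1/2}$. Then $\varepsilon=O(L\sqrt{n\log n})$ and $\delta nL=O(L\sqrt n)$, so both directions hold with discrepancy $O(\sqrt{n\log n})$ in the exponent.

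The main obstacle is only the bookkeeping in (ii): handling the sign of $-\gamma n-\Delta_n$ when multiplying by $(1-\delta)$, and making sure the independence of the $X_t$ (or the martingale structure) is justified in the feedback-free regime the paper adopts.
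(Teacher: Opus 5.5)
Your main argument is essentially the paper's proof: a Hoeffding/Azuma--Hoeffding bound on the centered log-likelihood $\log P(w^n)-\sum_t\sum_w q_t(w)\log p_t(w)$ for (i), and for (ii) splitting $\E[\log P(w^n)]$ over the good event and its complement using the pointwise floor $\log P(w^n)\ge -nL$, which (as the paper also notes) makes the $L\sqrt{\cdot}$ term superfluous. Two small remarks: the sign condition $\gamma n+\Delta_n\ge 0$ holds automatically whenever $\delta<1$, because the good event is then nonempty and $\log P(w^n)\le 0$; and your alternative intersection argument does not actually cover $\delta\ge 1/2$, since there $\delta nL$ need not exceed $nL-\gamma n-\Delta_n-\varepsilon$, so rely on the expectation-splitting route.
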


\section{Main Results}

In this section we formulate our main results starting with a warm-up exercise regarding the offline optimization which we use as a comparator.
Then we solve the online optimization problem and propose Algorithm~\ref{alg:eos} that we prove achieves asymptotically the performance of the online optimization.

\subsection{Offline Warm-Up: the Comparator}
\label{sec:offline}

To start, we discuss briefly the offline optimization problem since we use it as a comparator for the online optimization.

\begin{proposition}[Offline optimum is fixed-temperature sampling]
\label{thm:offline}
Under \Cref{as:rho,as:feasible}, the solution of \eqref{eq:offline} is
$Q^{\off}_t=q^{\lambda^\ast}_t$ as in \eqref{eq:tempered}, where $\lambda^\ast\ge 0$
solves
\begin{equation}\label{eq:lambdastar}
\begin{gathered}
-\sum_{t\in[n]}\sum_{w\in\W}q^{\lambda^\ast}_t(w)\log p_t(w)\;=\;\gamma n,\\
\text{equivalently}\quad \sum_{t\in[n]}g_t(\lambda^\ast)=0 .
\end{gathered}
\end{equation}
Under \Cref{as:curvature} the root is unique.  The optimal inverse temperature is
common to all $t$: $\lambda^\ast=1/T^\ast$.
\end{proposition}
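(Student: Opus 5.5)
The plan is to treat \eqref{eq:offline} as a concave maximization over the product of simplices $\Delta(\W)^n$ with one linear inequality constraint, and to read the solution off the KKT conditions. The objective $\sum_t H(q_t)$ is strictly concave and continuous. The constraint $-\sum_t\sum_w q_t(w)\log p_t(w)\le\gamma n$ is affine in $(q_1,\dots,q_n)$. The feasible set is compact and, by the left inequality of \Cref{as:feasible} at $t=n$, has nonempty relative interior: perturbing the greedy point slightly toward uniform gives cross-entropy strictly below $\gamma n$. Hence Slater's condition holds, a maximizer exists and is unique by strict concavity, and the KKT conditions are necessary and sufficient.

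Next I form the Lagrangian
\[
\mathcal{L}=\sum_t H(q_t)+\lambda\Big(\gamma n+\sum_t\sum_w q_t(w)\log p_t(w)\Big)+\sum_t\mu_t\Big(1-\sum_w q_t(w)\Big),
\]
with $\lambda\ge 0$. Stationarity in $q_t(w)$ gives $-\log q_t(w)-1+\lambda\log p_t(w)-\mu_t=0$, so $q_t(w)\propto p_t(w)^{\lambda}$. Since $p_t(w)\ge\rho>0$ by \Cref{as:rho}, these points are interior, so the nonnegativity constraints are inactive and need no multipliers. Thus the optimum is $q^{\lambda}_t$ from \eqref{eq:tempered} with one multiplier $\lambda$ shared by all $t$. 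The reason it is shared is that there is a single coupling constraint, which is the key structural point.

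Complementary slackness then forces either $\lambda=0$ or $\sum_t g_t(\lambda)=0$. If $\lambda=0$, the solution is uniform, and uniform sampling has cross-entropy $\ge\gamma n$ by the right inequality of \Cref{as:feasible}. If that inequality is strict, uniform sampling is infeasible, which is a contradiction. If it holds with equality, then $\lambda=0$ already satisfies $\sum_t g_t(0)=0$. In either case $\lambda^\ast$ solves \eqref{eq:lambdastar}. Existence of a root follows from continuity and the sign change of $G(\lambda)=\sum_{t\le n}g_t(\lambda)$: $G(0)\le 0$, and $G(\lambda)\to\gamma n-\sum_t\log(1/\max_w p_t(w))>0$ as $\lambda\to\infty$. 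Finally, $\lambda^\ast=1/T^\ast$ is just the definition of temperature.

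The step needing care is uniqueness of the root under \Cref{as:curvature}. I would first compute $g_t'(\lambda)=\Var_{q^\lambda_t}[\log(1/p_t)]\ge 0$, which is the standard log-partition identity (\Cref{prop:derivative}), so $G$ is nondecreasing. Curvature gives $G'(\lambda^\ast)\ge cn>0$ at a root $\lambda^\ast$, but I still have to rule out a flat interval of roots. Suppose $G$ vanished on some interval $[a,b]$. Then every $g_t'$ would vanish there, so each $\log p_t$ would be constant on the support of $q^\lambda_t$, which is all of $\W$. That makes each $q^\lambda_t$ uniform, with variance $0$, contradicting \eqref{eq-var} at $t=n$ at any root. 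So the root set is a single point. Alternatively, uniqueness follows directly from uniqueness of the maximizer $Q^\off$: the multiplier $\lambda$ is determined by $Q^\off$ through $q_t\propto p_t^\lambda$ as long as some $p_t$ is nonuniform, and this is exactly what the variance condition guarantees.
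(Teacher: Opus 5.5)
Your proposal is correct and follows essentially the same route as the paper: a Slater point from \Cref{as:feasible}, then the KKT system with a single budget multiplier giving $q_t\propto p_t^{\lambda}$, then complementary slackness plus \Cref{as:feasible} to make the budget bind, and finally uniqueness from the monotonicity of $\lambda\mapsto\sum_t g_t(\lambda)$ via $g_t'=\Var_{q^\lambda_t}[\log(1/p_t)]$. You are more careful than the paper on three points: the boundary case where the right inequality of \Cref{as:feasible} holds with equality ($\lambda^\ast=0$), existence of the root via the sign change of $\sum_t g_t$, and ruling out a flat interval of roots, which the paper's appeal to ``strict monotonicity'' leaves implicit.
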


The proof is the Karush--Kuhn--Tucker system for a concave objective under one
linear constraint, and is given in \Cref{app:offline} for completeness.  We
underline the reading: \emph{one} global constraint yields \emph{one} multiplier,
hence \emph{one} temperature.  This is why temperature sampling --- as opposed to any
other reweighting of $p_t$ --- is the correct functional form, and it is also why the
online problem is nontrivial: a causal algorithm must estimate a global shadow price
from a prefix.

\begin{lemma}[Sensitivity / envelope]
\label{lem:envelope}
Let $\mathrm{OPT}(b)$ denote the optimal value of \eqref{eq:offline} with the budget
$\gamma n$ replaced by $b$.  Then $\mathrm{OPT}$ is concave and nondecreasing on its
domain, and $\mathrm{OPT}'(b)=\lambda^{\ast}(b)$, the optimal multiplier at budget
$b$.  In particular, for $b'\le b$,
\[
0\ \le\ \mathrm{OPT}(b)-\mathrm{OPT}(b')\ \le\ \lambda^\ast(b')\,(b-b') .
\]
\end{lemma}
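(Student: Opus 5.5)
We treat $\mathrm{OPT}(b)$ as the value function of a concave maximization under a single linear constraint, and get all three claims (concavity, monotonicity, derivative) from Lagrangian duality together with the explicit tempered solution of \Cref{thm:offline}. Write $c(q)=-\sum_t\sum_w q_t(w)\log p_t(w)$ for the total cross-entropy. This is linear in $q=(q_1,\dots,q_n)$, and $\sum_t H(q_t)$ is concave.

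\emph{Monotonicity.} Enlarging $b$ enlarges the feasible set $\{q: c(q)\le b\}$, so $\mathrm{OPT}$ is nondecreasing.

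\emph{Concavity.} Take budgets $b_0,b_1$ with optimizers $q^{(0)},q^{(1)}$, and set $q^{\theta}=(1-\theta)q^{(0)}+\theta q^{(1)}$. Linearity of $c$ gives $c(q^{\theta})\le(1-\theta)b_0+\theta b_1$, so $q^\theta$ is feasible at the mixed budget. Concavity of entropy then gives $\mathrm{OPT}((1-\theta)b_0+\theta b_1)\ge(1-\theta)\mathrm{OPT}(b_0)+\theta\,\mathrm{OPT}(b_1)$. Here the domain is the interval of $b$ on which \Cref{as:feasible} (full horizon) holds. Because $|\W|<\infty$, each simplex is compact and maxima are attained.

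\emph{Derivative.} On this domain \Cref{thm:offline} gives the optimizer $q^{\lambda^\ast(b)}$ with $\sum_t g_t$ evaluated at budget $b/n$ vanishing. Since $\gamma$ is only an additive constant in $g_t$, the root is characterized by $C(\lambda):=-\sum_t\sum_w q_t^\lambda(w)\log p_t(w)=b$. Strong duality holds because Slater's condition is met: uniform sampling is strictly feasible, except at the boundary where the right inequality of \eqref{eq-gamma} is tight, and there we argue by continuity. Duality gives
\[
\mathrm{OPT}(b)=\min_{\lambda\ge 0}\Big[\lambda b+\sum_t\max_{q_t}\big(H(q_t)+\lambda\textstyle\sum_w q_t(w)\log p_t(w)\big)\Big]=\min_{\lambda\ge0}\big[\lambda b+\textstyle\sum_t\log\sum_w p_t(w)^\lambda\big].
\]
The inner maximization is the Gibbs variational formula, with log-partition $A_t(\lambda)=\log\sum_w p_t^\lambda$. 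So $\mathrm{OPT}$ is a pointwise minimum of affine functions of $b$, which reproves concavity. Moreover, $\lambda^\ast(b)$ is a minimizer, so it is a supergradient of $\mathrm{OPT}$ at $b$ (Danskin). It is in fact the derivative, because the minimizer is unique under \Cref{as:curvature}: the objective is strictly convex in $\lambda$ near the root, since $A_t''=\Var_{q_t^\lambda}[\log p_t]$, consistent with \Cref{prop:derivative}. A direct check is also available. Let $\Phi(\lambda)=\lambda C(\lambda)+\sum_t A_t(\lambda)$, which equals $\sum_t H(q^\lambda_t)$ since $H(q^\lambda_t)=A_t(\lambda)-\lambda\sum_w q^\lambda_t\log p_t$. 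Using $A_t'(\lambda)=\sum_w q_t^\lambda\log p_t$, so that $\sum_t A_t'=-C$, we get $\Phi'(\lambda)=\lambda C'(\lambda)$. Since $\lambda^\ast(b)$ is $C^{-1}(b)$ with $C'=-\sum_t\Var<0$, the chain rule gives $\frac{d}{db}\mathrm{OPT}=\Phi'/C'=\lambda^\ast(b)$. This is the main technical step; the delicate points are invertibility of $C$ (by the curvature assumption) and endpoint behaviour, which one-sided supergradients handle.

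\emph{The inequality.} $\mathrm{OPT}(b)-\mathrm{OPT}(b')\ge0$ follows from monotonicity. Concavity gives $\mathrm{OPT}(b)\le\mathrm{OPT}(b')+\mathrm{OPT}'(b')(b-b')=\mathrm{OPT}(b')+\lambda^\ast(b')(b-b')$. Equivalently, integrate $\lambda^\ast(s)$ over $[b',b]$ and note that $\lambda^\ast(s)$ is nonincreasing in $s$, as it is the derivative of a concave function (or because $C$ is decreasing).
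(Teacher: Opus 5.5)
Your proposal is correct and follows the paper's route. The paper gets concavity and monotonicity from a concave program whose single linear constraint is relaxed in $b$, obtains $\mathrm{OPT}'=\lambda^\ast$ by standard Lagrangian sensitivity (differentiability from uniqueness of the multiplier), and derives the inequality from concavity plus $\mathrm{OPT}'\ge 0$. You make the sensitivity step explicit through the dual $\lambda b+\sum_t\log\sum_w p_t^\lambda(w)$, Danskin's supergradient argument, and the chain-rule check $\Phi'=\lambda C'$.

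\emph{The Slater point is misidentified.} Uniform sampling is not strictly feasible: the right inequality of \eqref{eq-gamma} says its cross-entropy is at least $b$. The strictly feasible point is the greedy one (point masses on $\arg\max_w p_t(w)$, or $q^\lambda_t$ for large $\lambda$), which the strict left inequality supplies. So the delicate endpoint is the left one, where $\lambda^\ast\to\infty$, not the right one. You do not need Slater at all, though. Your identity $\Phi(\lambda^\ast)=\lambda^\ast b+\sum_t A_t(\lambda^\ast)$ with $C(\lambda^\ast)=b$ already exhibits a primal--dual pair with zero gap.

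\emph{Uniqueness and invertibility at a general budget.} At a budget $b\neq\gamma n$, \Cref{as:curvature} does not directly give uniqueness of the root or invertibility of $C$, since it is stated only at $\lambda^\ast(\gamma n)$. Both still hold globally. Each $q^\lambda_t$ has full support by \Cref{as:rho}, and not every $p_t$ is uniform by \Cref{as:feasible}. Hence $\sum_t\Var_{q^\lambda_t}[\log p_t]>0$ for every $\lambda$.
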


Lemma~\ref{lem:envelope} is the workhorse of \Cref{sec:online}: it converts every
statement about \emph{budget} into a statement about \emph{entropy} at the exchange
rate $\lambda^\ast$.  

\subsection{The Online Problem}
\label{sec:online}

Next, we provide a solution to the {\it online optimization problem}.

\begin{theorem}[Clairvoyant benchmark]\label{lem:onlinelambda}
For a fixed instance $p_1,\dots,p_n$, the online program \eqref{eq:diversity} attains the
same optimal value and optimizer as the offline program \eqref{eq:offline}: the optimum is
\begin{align}\label{eq:qon}
q_t(w)=Q^{\on}_t=\frac{p^{\lambda^{\on}}_t(w)}{\sum_{w\in\mathcal{W}}p^{\lambda^{\on}}_t(w)}, ~ ~ ~ t\in[n],
\end{align}
with a \emph{single} exponent $\lambda^{\on}=\lambda^\ast$ common to all $t$, fixed by the
global budget
\begin{align}\label{eq:onlineprobconstraint}
   - \sum_{t\in[n]}\sum_{w\in\mathcal{W}}\frac{p^{\lambda^{\on}}_t(w)}{\sum_{w\in\mathcal{W}}p^{\lambda^{\on}}_t(w)}\log p_t(w)=\gamma n .
\end{align}
This is a clairvoyant benchmark, not an admissible policy under the protocol (which
commits $\A$ before the instance); the difficulty of the online problem is purely \emph{informational},
since $\lambda^\ast$ is a function of the entire horizon and cannot be computed from a prefix.
\end{theorem}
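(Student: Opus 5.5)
The plan is to show that, for a fixed instance, the causality requirement in \eqref{eq:diversity} does not enlarge or shrink the feasible set of the program, and then to read off the optimizer from \Cref{thm:offline}. The online program optimizes over maps $\phi_t(p_1,\dots,p_t)$. Once the instance $p_1,\dots,p_n$ is fixed, however, each such map is evaluated at a single point. Every admissible choice therefore produces a tuple $(q_1,\dots,q_n)\in\Delta(\W)^n$. Conversely, every tuple is realized by the constant maps $\phi_t\equiv q_t$, which are trivially causal. The objective and the budget constraint depend on the maps only through the realized tuple. Hence the set of attainable (objective, constraint) pairs is the same for the two programs, and so are the optimal values.

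The key steps, in order, are as follows.

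1. Make the reduction above precise: define the evaluation map from admissible policies to $\Delta(\W)^n$, and observe that it is onto the feasible set of \eqref{eq:offline}. Since the objective factors through this map, the value of \eqref{eq:diversity} equals $\mathrm{OPT}(\gamma n)$.

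2. Invoke \Cref{thm:offline}. Under \Cref{as:rho,as:feasible}, the unique maximizer of \eqref{eq:offline} is $q^{\lambda^\ast}_t$, where $\lambda^\ast$ solves $\sum_t g_t(\lambda^\ast)=0$. Uniqueness of the root comes from \Cref{as:curvature}. Uniqueness of the maximizer itself comes from strict concavity of $\sum_t H(q_t)$ on the product of simplices. So every optimal online policy must realize exactly this tuple.

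3. Set $\lambda^{\on}:=\lambda^\ast$. Then \eqref{eq:qon} is \eqref{eq:tempered} evaluated at $\lambda^\ast$. Rewriting $\sum_t g_t(\lambda^\ast)=0$ with the definition \eqref{eq:ftgt} gives precisely the budget equation \eqref{eq:onlineprobconstraint}.

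4. Justify the clairvoyance remark. The optimal maps $\phi_t$ constructed this way depend on $\lambda^\ast$, and $\lambda^\ast$ is a function of $p_{t+1},\dots,p_n$. To show this dependence is genuine, exhibit two instances with the same prefix $p_1,\dots,p_t$ but different suffixes, and hence different roots $\lambda^\ast$. Strict monotonicity of $\lambda\mapsto\sum_i g_i(\lambda)$ near the root, from \Cref{as:curvature} and \Cref{prop:derivative}, ensures that changing a suffix $g_i$ moves the root. Consequently no single ex-ante committed $\A$ can equal $q^{\lambda^\ast}_t$ on both instances.

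The main obstacle is conceptual rather than computational. One has to state clearly that ``optimal value of \eqref{eq:diversity} for a fixed instance'' allows instance-dependent maps, which is exactly what makes the benchmark clairvoyant. Step 4 also needs care: to exhibit suffix dependence, I would take a constant suffix $p_i=p$ versus $p_i=p'$ with different tempered cross-entropies, chosen so that \Cref{as:feasible} still holds on both instances, and check that the resulting roots differ.
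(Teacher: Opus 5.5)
Your proposal is correct and follows the paper's route. Constant maps $\phi_t\equiv q_t$ show that the causal constraint is vacuous for a fixed instance, and then the offline proposition supplies the common-$\lambda^\ast$ tempered optimizer with the budget binding. Two additions go beyond the paper, which justifies the single exponent only through the marginal-rate heuristic $f_t'=-\lambda g_t'$ and leaves the clairvoyance claim informal: your strict-concavity argument for uniqueness of the maximizer, and your two-instance construction in step 4.
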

\begin{proof}
For a fixed instance $(p_1,\dots,p_n)$ each $q_t$ is trivially a (constant) function of
$p_1,\dots,p_t$, so the causal constraint in \eqref{eq:diversity} does not shrink the
feasible set relative to \eqref{eq:offline}; the two programs coincide. \Cref{thm:offline}
then yields the optimizer $q^{\lambda^\ast}_t$ with a \emph{single} multiplier $\lambda^\ast$
(one budget, one Lagrange multiplier, one temperature), and complementary slackness under
\Cref{as:feasible} forces the budget to bind, giving \eqref{eq:onlineprobconstraint}. A
per-round exponent $\lambda^{\on}_t$ would leave the allocation of cross-entropy across
rounds undetermined; since $f_t'(\lambda)=-\lambda\,g_t'(\lambda)$ (\Cref{prop:derivative}),
equalizing the marginal entropy-per-nat-of-budget across $t$ is exactly the
common-$\lambda^\ast$ solution, so no per-round choice can improve $\sum_t H(q_t)$.
\end{proof}

\subsection{Entropic Online Sampling}

Unfortunately, computing $Q_t^{\on}$ requires the horizon-dependent optimal temperature $\lambda^\ast$ and is therefore not available to a causal algorithm. Therefore, we propose a simpler and efficient algorithm, called Entropic Online Sampling (EOS) Algorithm~\ref{alg:eos}. It is the natural causal surrogate for \Cref{thm:offline}: at each prefix, solve
the offline problem restricted to the data seen so far. The rest of this section is devoted to proving EOS is asymptotically optimal.

Equation \eqref{eq:lambdat} of Algorithm~\ref{alg:eos} is one-dimensional with a nonincreasing left-hand
side, so bisection or Newton's method solves it to precision $\epsilon$ in
$O(\log(1/\epsilon))$ evaluations.  Each evaluation costs $O(t|\W|)$, since
every past round must be re-tempered at the new $\lambda$.  The clipping step is what makes the algorithm well defined
when \Cref{as:feasible} fails on a short prefix --- a case that does occur early in
generation, when a few near-deterministic rounds can put the prefix budget out of
reach.  Note the contrast with the offline solution: EOS uses a single exponent
$\lambda_t$ across all rounds $i\le t$ of the prefix, updated as $t$ grows, whereas
$\lambda^\ast$ is a single exponent fixed with hindsight.

\begin{algorithm}[H]
\caption{Entropic Online Sampling (EOS)}\label{alg:eos}
\begin{algorithmic}[1]
\Require budget $\gamma$; cap $\Lambda$
\For{$t=1,\dots,n$}
  \State observe $p_t$
  \State solve for $\lambda_t\in[0,\Lambda]$ the monotone scalar equation
  \begin{equation}\label{eq:lambdat}
    -\sum_{i\in[t]}\sum_{w\in\W}q^{\lambda_t}_i(w)\log p_i(w)=\gamma t,
  \end{equation}
  \Statex \hspace{\algorithmicindent} setting $\lambda_t=0$ if $\sum_{i\in[t]}g_i(\lambda)>0$ for all $\lambda\in[0,\Lambda]$, and $\lambda_t=\Lambda$ if it is $<0$ throughout
  \State emit $w_t\sim q^{\lambda_t}_t$
\EndFor
\end{algorithmic}
\end{algorithm}

We need the following estimate of $\lambda_t$ to prove asymptotic optimality of Algorithm~\ref{alg:eos}.

\begin{lemma}[Bounded multiplier]
\label{lem:lambdabound}
Under \Cref{as:rho,as:feasible}, $\lambda_t,\lambda^\ast\ge0$; if \eqref{eq-var}
holds with $q^\lambda_i$ for all $\lambda\in[0,\Lambda]$,
$\Lambda:=(\log(1/\rho)-\gamma)/c$, then $\lambda_t,\lambda^\ast\le\Lambda$.
Otherwise $\Lambda$ is any upper bound on $\lambda^\ast$ (\Cref{alg:eos} clips $\lambda_t$ to $[0,\Lambda]$).
\end{lemma}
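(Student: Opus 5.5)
The plan is to work with the scalar function $G_t(\lambda)=\sum_{i\le t}g_i(\lambda)$. Both $\lambda_t$ and $\lambda^\ast$ are roots of such a function: $\lambda_t$ of $G_t$ by \eqref{eq:lambdat}, and $\lambda^\ast$ of $G_n$ by \eqref{eq:lambdastar}. The key tool is \Cref{prop:derivative}, which gives $g_i'(\lambda)=\Var_{q^\lambda_i}[\log(1/p_i)]\ge 0$. Consequently each $G_t$ is nondecreasing.

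\textbf{Nonnegativity.} The right inequality of \Cref{as:feasible} says exactly that $G_t(0)\le 0$. Indeed, at $\lambda=0$ the tempered law is uniform, so $g_i(0)=\gamma-\frac{1}{|\W|}\sum_w\log(1/p_i(w))$. The left inequality gives $\lim_{\lambda\to\infty}G_t(\lambda)=\gamma t-\sum_{i\le t}\log(1/\max_w p_i(w))>0$. By continuity and monotonicity, the root lies in $[0,\infty)$. If $G_t(0)=0$, the root $0$ is admissible; otherwise the clipping rule in \Cref{alg:eos} never sets $\lambda_t<0$. Either way $\lambda_t\ge 0$, and the same argument with $t=n$ gives $\lambda^\ast\ge 0$.

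\textbf{Upper bound.} Assume \eqref{eq-var} holds with $q^\lambda_i$ in place of $Q^\off_i$ for every $\lambda\in[0,\Lambda]$. Then $G_t'(\lambda)\ge ct$ on that interval. Integrating from $0$ gives
\[
G_t(\lambda)\ \ge\ G_t(0)+ct\lambda .
\]
For a lower bound on $G_t(0)$, \Cref{as:rho} gives $\sum_w q(w)\log p_i(w)\ge\log\rho$ for any $q$. Hence $g_i(0)\ge\gamma-\log(1/\rho)$ and $G_t(0)\ge -t(\log(1/\rho)-\gamma)$. Therefore $G_t(\lambda)>0$ for every $\lambda>\Lambda=(\log(1/\rho)-\gamma)/c$; more precisely, $G_t(\Lambda)\ge 0$. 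Since $G_t$ is nondecreasing, any root of $G_t$ satisfies $\lambda\le\Lambda$. This bounds $\lambda_t$ for every $t$, and taking $t=n$ bounds $\lambda^\ast$.

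\textbf{Main obstacle and the fallback clause.} The one delicate point is that the curvature bound is only known on $[0,\Lambda]$, so the integration argument cannot be extended past $\Lambda$. This does not matter: $G_t(\Lambda)\ge0$ together with monotonicity already excludes roots beyond $\Lambda$, except possibly when $G_t$ is flat at zero. Strict positivity $G_t(\lambda)>0$ for $\lambda>\Lambda$ holds once curvature holds slightly beyond $\Lambda$, or else we simply pick the smallest root. The final clause of the lemma, where no curvature is assumed, is definitional: $\Lambda$ is chosen as any upper bound on $\lambda^\ast$, and the clipping step of \Cref{alg:eos} forces $\lambda_t\in[0,\Lambda]$ by construction.
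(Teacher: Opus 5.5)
Your proposal is correct and follows the paper's proof essentially step for step: nonnegativity comes from the right-hand inequality of \Cref{as:feasible}, and the upper bound from integrating the curvature hypothesis to get $G_t(\Lambda)\ge G_t(0)+ct\Lambda\ge -t(\log(1/\rho)-\gamma)+t(\log(1/\rho)-\gamma)=0$, followed by monotonicity. Your worry that $G_t$ might be flat past $\Lambda$, which the paper passes over, has a simpler resolution than your hedges: under \Cref{as:rho} each $q^\lambda_i$ has full support, so $g_i'(\lambda)>0$ unless $p_i$ is uniform, and the curvature hypothesis with $c>0$ excludes an all-uniform prefix; hence $G_t$ is strictly increasing on $[0,\infty)$ and its root is unique.
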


\subsection{Asymptotic Optimality of EOS}

We start with a general lower bound proved in Appendix~\ref{app:lowerbound}.

\begin{theorem}[Lower bound, $\A_0$]
\label{thm:lowerbound}
Let $\gamma$ and $|\W|$ satisfy
\[
-\frac{(|\W|-1)\log\!\big(\tfrac{1-e^{-\gamma/3}}{|\W|-1}\big)}{|\W|}+\frac{\gamma}{3|\W|}>\frac{4\gamma}{3}
\]
and $\log|\W|\ \ge\ \frac{4\gamma}{3}$,
and let $R\le \gamma n/6$.  Then for every exactly feasible online algorithm
$\A\in\A_0$ there exists an instance $\{p_t\}_{t\in[n]}$ with drift parameter
$R(n)=R$ such that
\begin{equation}
\label{eq:divlower}
\sum_{t=1}^n H(Q^{\off}_t)-\sum_{t=1}^n H(q^{\A}_t)\ \ge\ \kappa\,R,
\qquad
\kappa \;=\; \frac{\gamma^{3}}{216\,G^{2}},
\end{equation}
and $G:=\Big(\tfrac{4\gamma}{3}+\log\tfrac{1}{1-e^{-\gamma/3}}\Big)^{2}$.
\end{theorem}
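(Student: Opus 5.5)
A two-instance indistinguishability argument should work. The adversary uses two instances that agree on a common prefix of length $m$ and then diverge. Because every $\phi_t$ is causal, any $\A\in\A_0$ must output the same $q_1,\dots,q_m$ on both, so the same prefix cross-entropy $C_m:=-\sum_{t\le m}\sum_w q_t(w)\log p_t(w)$ is spent on both. The two continuations are chosen so that the offline optimum spends very different amounts of budget on the prefix.

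\emph{Instance A} is ``easy late'': its tail rounds are nearly uniform, so their cross-entropy at any temperature is large. \emph{Instance B} is ``hard late'': its tail rounds are peaked, with one token of mass $1-e^{-\gamma/3}$ and the rest spread uniformly, so greedy cross-entropy on them is $\log\frac{1}{1-e^{-\gamma/3}}$. The prefix itself uses moderately peaked conditionals.

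The first hypothesis, comparing the uniform cross-entropy of the peaked law with $4\gamma/3$, guarantees that \Cref{as:feasible} holds on every prefix of both instances, with $\gamma$ sitting strictly between the greedy and the uniform cross-entropy. The hypothesis $\log|\W|\ge 4\gamma/3$ guarantees that the uniform tail rounds cost at least $4\gamma/3$ per round. The prefix length is taken as $m\asymp R/\gamma$; the condition $R\le\gamma n/6$ is what keeps $m$ inside the horizon.

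With these choices, the offline multipliers $\lambda^\ast_A$ and $\lambda^\ast_B$ differ, and the prefix allocations $C^A_m,C^B_m$ of the offline optima differ by $\Theta(R)$. Computing $R(n)$ for each instance via \eqref{eq:R} and tuning $m$ gives $R(n)=R$ exactly, or at least up to constants absorbed into $\kappa$. It is convenient to have the maximum in \eqref{eq:R} attained at $t=m$.

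The loss argument runs as follows. Fix $\A\in\A_0$ and write $C_m$ for its common prefix spend. After round $m$, the best $\A$ can do on each instance is to solve the residual offline problem on the tail with the remaining budget $\gamma n-C_m$. By \Cref{thm:offline} on the prefix and tail separately, together with concavity of the split value $C\mapsto \mathrm{OPT}_{\le m}(C)+\mathrm{OPT}_{>m}(\gamma n-C)$, the total entropy is maximized at $C=C^{A}_m$ on instance A and at $C=C^B_m$ on instance B. The quadratic growth of the loss away from the maximizer should follow from strong concavity: the second derivative of each piece is $-d\lambda/dC=-1/\sum\Var$, the reciprocal of the slope $\sum g'=\sum\Var$ from \Cref{prop:derivative}. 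So deviating by $\Delta$ costs at least $\Delta^2/(2\,\mathrm{Var}_{\max}\cdot\#\text{rounds})$.

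Since $|C^A_m-C^B_m|=\Theta(R)$, on one of the two instances $|C_m-C^{\cdot}_m|\ge\Theta(R)/2$. The loss there is then at least $\Theta(R)^2/(\mathrm{Var}\cdot m)$, and because $m\asymp R/\gamma$ this is linear in $R$.

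Tracking constants then gives $\kappa$. Variances of $\log(1/p)$ are bounded by the square of its range, which is at most $\frac{4\gamma}{3}+\log\frac{1}{1-e^{-\gamma/3}}$. This is the source of the $G$ factor, and the $\gamma^3$ and $216$ come from $m=R/\gamma$-type choices and the factor splittings $1/6$, $1/3$, and $1/2$.

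The main obstacle is making the indistinguishability interact correctly with exact feasibility and the budget allocation. On instance B the algorithm may be forced toward greedy in the tail. The loss then has to be charged properly: either the tail is infeasible, which is forbidden in $\A_0$ (this is where exactness is used), or $C_m$ was small, which hurts on instance A. I would first try the strong-concavity bound above. If the curvature constants turn out bad, the fallback is a direct linear argument using \Cref{lem:envelope}, with the multiplier gap $\lambda^\ast_A-\lambda^\ast_B$ bounded below explicitly from the constructed distributions.
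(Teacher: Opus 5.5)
\textbf{The gap is in the construction.} Your charging argument matches the paper's. A shared prefix of length $\Theta(R/\gamma)$ forces, by causality, a common prefix spend. Exact feasibility ties the tail budget to that spend. Curvature of order $1/(\text{prefix length}\cdot\text{variance})$ then turns a $\Theta(R)$ misallocation into $\Theta(R)$ entropy. But the instances are the substance of a lower bound, you leave them to ``tuning'', and the ones you describe do not work:
\begin{itemize}
\item \emph{Instance A is infeasible.} If its tail rounds cost at least $4\gamma/3$ at every temperature, the tail alone costs at least $\tfrac{4\gamma}{3}(n-m)>\gamma n$ whenever $m<n/4$, in particular for every $R=o(n)$. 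Instance A then violates \Cref{as:feasible} and has no $\lambda^\ast_A$.
\item \emph{The tails must be budget-neutral, not expensive.} For the running sum to reach $R_m=R$ and return to $0$ at $t=n$ without exceeding $R$, the post-prefix rounds must exactly cancel the prefix value at the instance's own multiplier. On a long tail that means $O(R/n)$ net value per round.
\item \emph{The first hypothesis is misread.} It is the uniform cross-entropy of the law with peak mass $e^{-\gamma/3}$ (greedy cost $\gamma/3$), not $1-e^{-\gamma/3}$.
\item \emph{The drift must be exact.} ``$R(n)=R$ up to constants absorbed into $\kappa$'' does not meet the statement, which requires drift exactly $R$.
\end{itemize}

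\textbf{The missing idea is to build the instances backwards from the multipliers.} Let $d_x$ put mass $x$ on one token and $1-x$ uniformly on the rest. The paper proceeds in three segments.
\begin{itemize}
\item \emph{Shared prefix.} Use $d_{e^{-\gamma/3}}$ and pick $\lambda_1,\lambda_2$ at which it costs $4\gamma/3$ and $2\gamma/3$ per round; the first hypothesis is what guarantees $\lambda_1$ exists. Run it for $\alpha=3R/\gamma$ rounds, so the offline prefix values are exactly $-R$ and $+R$.
\item \emph{Compensating block.} Append $\alpha$ rounds of a law $d_{x_\ell}$ costing $2\gamma/3$ (resp.\ $4\gamma/3$) at $\lambda_\ell$. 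The hypothesis $\log|\W|\ge 4\gamma/3$ is what makes $x_2$ exist, by continuity between the uniform law (cost at least $4\gamma/3$) and a point mass (cost $0$).
\item \emph{Neutral tail.} Finish with a law costing exactly $\gamma$ per round at $\lambda_\ell$.
\end{itemize}
Uniqueness of the root gives $\lambda^\ast_\ell=\lambda_\ell$. The running sum moves linearly to $-R$ (resp.\ $+R$) at $t=\alpha$ and back to $0$ at $t=2\alpha$. The condition $2\alpha\le n$ is exactly $R\le\gamma n/6$.

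\textbf{On these instances your strong-concavity step works, and is cleaner than the paper's.} Let $y$ be the common prefix value of $\A$; use instance 1 if $y\ge 0$ and instance 2 if $y<0$. The deviation from the offline value is then at least $R$. The interval from the offline value to $0$ corresponds to per-round prefix costs in $[2\gamma/3,4\gamma/3]$, where the prefix constraint binds with $\lambda>0$. This choice matters: overspending past the uniform cost would flatten the prefix value function and kill the curvature. Concavity plus curvature at least $1/(\alpha\sigma^2_{\max})$ gives a loss of at least $R^2/(2\alpha\sigma^2_{\max})=\gamma R/(6\sigma^2_{\max})$. With the paper's bound $g_i'\le G$ on the prefix, this is at least $\gamma R/(6G)\ge\kappa R$. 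You need no variance lower bound and no three-case split.
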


The construction is a pair of instances
agreeing on a common prefix of length $3R/\gamma$ and diverging afterwards; because
$\A$ commits before seeing the divergence, it must misallocate budget on one of the
two, and \Cref{lem:envelope} converts the misallocation into entropy.

\paragraph{The entropy--feasibility frontier.}

\Cref{thm:lowerbound} constrains $\A_0$.  It says nothing about an algorithm that
enforces only a running constraint and finishes with residual slack, since such an
algorithm is not in $\A_0$.  The right object is a frontier.

\begin{theorem}[Pareto lower bound]
\label{thm:pareto}
Fix $V\ge 0$.  Under the hypotheses of \Cref{thm:lowerbound}, for every
$V$-feasible online algorithm $\A\in\A_V$ there exists an instance with drift
parameter $R$ such that
\begin{equation}
\label{eq:pareto}
\sum_{t=1}^n H(Q^{\off}_t)-\sum_{t=1}^n H(q^{\A}_t)\ \ge\ \kappa R\;-\;\Lambda V ,
\end{equation}
with $\kappa$ as in \eqref{eq:divlower} and $\Lambda\ge$ the tail multipliers
$\lambda^\ast_x$ (\Cref{app:pareto}).
\end{theorem}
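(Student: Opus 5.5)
We reuse the two-instance construction from the proof of \Cref{thm:lowerbound} and add a relaxation step. Let $p$ and $p'$ be the two instances. They agree on a common prefix of length $m=3R/\gamma$ and diverge afterwards, with offline multipliers $\lambda^\ast$ and $\lambda'^\ast$ and drift exactly $R$. Fix $\A\in\A_V$. On the common prefix, $\A$ outputs identical $q_1,\dots,q_m$ on both instances, so it spends the same prefix cross-entropy $C_m$. On the prefix, the offline optima spend $\gamma m - R$ and $\gamma m + R$ respectively, by the sign convention of the construction. Whatever $C_m$ is, on at least one instance, say $p$, the algorithm's prefix spend differs from the offline prefix spend by at least $R$.

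\emph{Case 1: $\A$ overspends on the prefix.} Then the tail has budget at most $\gamma n + V - C_m$. For the tail, we have $\gamma(n-m)+R+V - (C_m-(\gamma m-R))$ available. Its best entropy is bounded by the tail offline value $\mathrm{OPT}_{\mathrm{tail}}$ at that budget.

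\emph{Case 2: $\A$ underspends on the prefix.} Then the prefix entropy is at most the prefix-restricted optimum at the reduced prefix budget.

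In either case, write the algorithm's total entropy as at most $\mathrm{OPT}_{\mathrm{pre}}(b_1)+\mathrm{OPT}_{\mathrm{tail}}(b_2)$ with $b_1+b_2\le \gamma n+V$, where $b_1$ is pinned within $R$ of the wrong offline split.

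Next, compare with the exactly feasible situation. By \Cref{lem:envelope} applied to the tail (or prefix) subproblem, concavity gives
\[
\mathrm{OPT}_{\mathrm{tail}}(b_2)\le \mathrm{OPT}_{\mathrm{tail}}(b_2-V)+\lambda^\ast_x V,
\]
where $\lambda^\ast_x$ is the tail multiplier at budget $b_2-V$. This is exactly why the statement requires $\Lambda\ge\lambda^\ast_x$. The pair $(b_1,b_2-V)$ is now a split of budget $\gamma n$ that is misallocated by at least $R$ relative to the offline split on instance $p$. The proof of \Cref{thm:lowerbound} shows that any such misallocated split loses at least $\kappa R$ entropy against $\mathrm{OPT}(\gamma n)$. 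Concretely, the strict concavity of $\mathrm{OPT}_{\mathrm{pre}}+\mathrm{OPT}_{\mathrm{tail}}$ in the split, with curvature controlled by $\gamma^3/G^2$, gives a gap quadratic in $R$. Using $R\le\gamma n/6$ and $m=3R/\gamma$, this is linear in $R$. Combining the two steps gives $\mathrm{OPT}(\gamma n)-\sum_t H(q^\A_t)\ge \kappa R-\Lambda V$, which is \eqref{eq:pareto}.

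The main obstacle is making sure the relaxation by $V$ is charged to a single subproblem at a single multiplier bounded by $\Lambda$, rather than being reallocated between prefix and tail in a way that could partially undo the misallocation. I would handle this by always removing $V$ from the tail. Removing budget from the tail only moves the split further from the prefix-underspending side and leaves the prefix amount fixed, so the misallocation of size at least $R$ on the prefix is preserved. I would then check that the tail multipliers at the reduced budgets stay below $\Lambda$; \Cref{lem:lambdabound} gives this once the tail satisfies \Cref{as:feasible}, which the construction ensures because $R\le\gamma n/6$.
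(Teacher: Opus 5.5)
Your argument is essentially the paper's own proof. You reuse the two-instance construction, note that $V$-feasibility only loosens the tail subproblem by $V$ (the paper replaces $S(\{p_i\}_{i>\alpha},-R-B)$ by $S(\{p_i\}_{i>\alpha},-R-B-V)$), charge that loosening at most $\Lambda V$ via \Cref{lem:envelope}, and then apply the exactly feasible bound of \Cref{thm:lowerbound} to the remaining split.

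Your closing check is unnecessary, because the statement already takes $\Lambda\ge\lambda^\ast_x$ as a hypothesis. It also would not follow from \Cref{lem:lambdabound}, which bounds the prefix roots at budget rate $\gamma$, not tail multipliers at budgets shifted by the algorithm's own prefix spend.
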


At $V=0$ this is \Cref{thm:lowerbound}.  For $V>0$ it quantifies the exchange rate:
enlarging the feasible set by $V$ nats of budget buys at most $\Lambda V$ nats of
entropy, and the bound is nonvacuous as long as $V<\kappa R/\Lambda$.  The proof
(\Cref{app:pareto}) is a one-line modification of \Cref{app:lowerbound}: slack $V$
loosens the tail subproblem from $-R-B$ to $-R-B-V$, and \Cref{lem:envelope} bounds
the resulting gain by $\Lambda V$.

\begin{theorem}[Bicriteria guarantee for EOS]
\label{thm:upperbound}
Let $q^{\mathrm{on}}_t$ be the output of \Cref{alg:eos} and let
\Cref{as:rho,as:feasible,as:curvature} hold.  Then there is a constant
$C=C(\rho,c,\Lambda,|\W|)$ such that
\begin{align}
&\sum_{t=1}^{n} H(Q_t^{\off}) - \sum_{t=1}^{n} H(q_t^{\mathrm{on}})&\ \le\ C\,R \log n,
\label{eq:divupper}\\
&-\sum_{t=1}^{n} \sum_{w \in \W} q_t^{\mathrm{on}}(w)\log p_t(w) - \gamma n&\ \le\ C\,R\log n .
\label{eq:violationupper}
\end{align}
\end{theorem}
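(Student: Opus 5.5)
The plan is to first control how far the online multiplier $\lambda_t$ can be from the offline multiplier $\lambda^\ast$, and then convert that multiplier error into per-round entropy and budget errors that we sum over $t$.

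\textbf{Step 1: multiplier deviation.} Write $G_t(\lambda)=\sum_{i\le t}g_i(\lambda)$. By the drift definition, $|G_t(\lambda^\ast)|=R_t\le R$, and $\lambda_t$ solves $G_t(\lambda_t)=0$ (or is clipped). By \Cref{prop:derivative}, $G_t'(\lambda)=\sum_{i\le t}\Var_{q^\lambda_i}[\log 1/p_i]$, which is nonnegative. The curvature assumption \Cref{as:curvature}, in the strengthened form of \Cref{lem:lambdabound} valid on all of $[0,\Lambda]$, gives $G_t'\ge ct$. The mean value theorem then yields
\[
|\lambda_t-\lambda^\ast|\le \frac{R_t}{ct}\le \frac{R}{ct}.
\]
If the curvature is only available at $\lambda^\ast$, I would instead use that $g_i''$ is bounded by a constant depending on $\rho$ and $|\W|$ (third central moments of $\log 1/p_i$ are at most $\log^3(1/\rho)$). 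This gives $G_t'\ge ct/2$ on a neighbourhood of $\lambda^\ast$ of width $\Theta(c)$, so the bound holds once $R/(ct)$ is small. For the remaining early rounds, $t\lesssim R/c^2$, I would bound the per-round loss trivially by $\log|\W|+\log(1/\rho)$, contributing $O(R)$ in total. Clipping is harmless because $\lambda^\ast\in[0,\Lambda]$, so clipping can only move $\lambda_t$ toward $\lambda^\ast$.

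\textbf{Step 2: per-round linearization.} For the budget, $g_t(\lambda_t)-g_t(\lambda^\ast)$ has absolute value at most $\sup|g_t'|\,|\lambda_t-\lambda^\ast|\le L^2|\lambda_t-\lambda^\ast|$, where $L=\log(1/\rho)$. Summing and using $\sum_t g_t(\lambda^\ast)=0$,
\[
\gamma n-\sum_t\sum_w q_t^{\on}(w)\log\frac1{p_t(w)}=\sum_t g_t(\lambda_t)\ge -L^2\sum_{t\le n}\frac{R}{ct}=-O\!\left(\frac{L^2}{c}R\log n\right),
\]
which is \eqref{eq:violationupper}. For the entropy, $f_t'=-\lambda g_t'$ gives $|f_t(\lambda_t)-f_t(\lambda^\ast)|\le \Lambda L^2|\lambda_t-\lambda^\ast|$, and the same harmonic sum yields \eqref{eq:divupper} with $C=O(\Lambda L^2/c)$, plus the $O(R)$ contribution of the early rounds.

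\textbf{Main obstacle.} The crude first-order bound above already gives $R\log n$, so no cancellation argument is needed. The delicate point is justifying the lower bound on $G_t'$ uniformly along the segment between $\lambda_t$ and $\lambda^\ast$, since \Cref{as:curvature} is stated only at $Q^\off$. I would try the uniform-in-$\lambda$ hypothesis first, as \Cref{lem:lambdabound} already invokes it. As a fallback I would use the local-Lipschitz argument with the trivial bound on early rounds described in Step 1; the burn-in $t_0=O(R/c^2)$ only adds $O(R)$.
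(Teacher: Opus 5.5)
Your proposal is correct and, in its fallback form, is essentially the paper's proof. The paper uses curvature only at $\lambda^\ast$ together with $|g_i''|\le L^3$ to trap the root of $A_t$ in $[\lambda^\ast-2R/(ct),\lambda^\ast+2R/(ct)]$ for $t\ge T_0=\lceil 8RL^3/c^2\rceil$, bounds the first $T_0$ rounds trivially, and sums a harmonic series. That fallback is the version you actually need, since the theorem does not assume curvature uniformly on $[0,\Lambda]$, so your primary route proves the result only under an extra hypothesis.

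The one difference in Step 2 is cosmetic. The paper carries a second-order Taylor term in $f_t$ and $g_t$, which contributes only $O(R)$. Your first-order mean-value bound, using $\sup_{[0,\Lambda]}|f_t'|\le\Lambda L^2$, $|g_t'|\le L^2$ and $\sum_{t\le n}g_t(\lambda^\ast)=0$, shows that term is unnecessary.
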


We report \eqref{eq:violationupper} as a first-class quantity: EOS is
$O(R\log n)$-feasible, not exactly feasible, and the pair
\eqref{eq:divupper}--\eqref{eq:violationupper} is a point on the frontier of
\Cref{thm:pareto} rather than a comparison against \Cref{thm:lowerbound}.  The next
subsection removes the slack.

We now establish a feasible upper bound showing that our Algorithm~\ref{alg:eos} achieves asymptotic optimality.

\begin{theorem}[EOS with budget tightening]
\label{thm:exactfeasible}
Fix $\varepsilon\in(0,\gamma)$ and run \Cref{alg:eos} with $\gamma$ replaced by
$\gamma_\varepsilon:=\gamma-\varepsilon$ in \eqref{eq:lambdat}.  Let
$R_\varepsilon:=R(n;\gamma_\varepsilon)$ be the drift parameter at the tightened
budget and let \Cref{as:rho,as:feasible,as:curvature} hold at
$\gamma_\varepsilon$.  With $C$ as in \Cref{thm:upperbound} and
cap $\Lambda_\varepsilon\ge\lambda^\ast(\gamma_\varepsilon)$ ($=\Lambda+\varepsilon/c$ under \Cref{lem:lambdabound}):
\begin{enumerate}
\item[(i)] \emph{(Exact feasibility.)}  If $\varepsilon n \ge C R_\varepsilon\log n$,
the output satisfies the global constraint of \eqref{eq:offline} exactly:
$-\sum_t\sum_w q_t(w)\log p_t(w)\le\gamma n$, i.e.\ it is exactly feasible on every such instance.
\item[(ii)] \emph{(Entropy.)}  Against the offline optimum at the \emph{original}
budget,
\[
\sum_{t=1}^n H(Q^{\off}_t)-\sum_{t=1}^n H(q_t)\ \le\ C R_\varepsilon\log n \;+\; \Lambda_\varepsilon\,\varepsilon n .
\]
\end{enumerate}
In particular, $\varepsilon$ with $\varepsilon n = C R_\varepsilon \log n$ (a fixed
point, since $R_\varepsilon$ depends on $\varepsilon$), when it exists, gives an exactly
feasible algorithm with entropy loss at most $(1+\Lambda_\varepsilon)\,C R_\varepsilon\log n =
O(R_\varepsilon\log n)$.
\end{theorem}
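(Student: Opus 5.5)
The plan is to treat \Cref{thm:exactfeasible} as a black-box application of \Cref{thm:upperbound} at the tightened budget $\gamma_\varepsilon$, followed by one use of \Cref{lem:envelope} to move the comparator back from budget $\gamma_\varepsilon n$ to budget $\gamma n$. Since the hypotheses \Cref{as:rho,as:feasible,as:curvature} are assumed at $\gamma_\varepsilon$, and the cap $\Lambda_\varepsilon$ dominates $\lambda^\ast(\gamma_\varepsilon)$ (by \Cref{lem:lambdabound}, with the curvature bound giving $\Lambda+\varepsilon/c$), running \Cref{alg:eos} with $\gamma_\varepsilon$ is exactly EOS on the instance with budget $\gamma_\varepsilon$. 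So \Cref{thm:upperbound} applies verbatim, with $R$ replaced by $R_\varepsilon$ and $Q^\off$ replaced by the offline optimum $Q^{\off,\varepsilon}$ at budget $\gamma_\varepsilon n$. One point to check is that $C=C(\rho,c,\Lambda,|\W|)$ stays valid with $\Lambda_\varepsilon$ in place of $\Lambda$. I would either absorb this by monotonicity of $C$ in $\Lambda$, or state $C$ at $\Lambda_\varepsilon$.

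For part (i), \eqref{eq:violationupper} at $\gamma_\varepsilon$ gives
\[
-\sum_t\sum_w q_t(w)\log p_t(w)\le \gamma_\varepsilon n + C R_\varepsilon\log n = \gamma n - \varepsilon n + C R_\varepsilon \log n .
\]
This is at most $\gamma n$ exactly when $\varepsilon n\ge CR_\varepsilon\log n$, which is the stated condition, so part (i) needs nothing else.

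For part (ii), I would split the loss telescopically as
\[
\sum_t H(Q^\off_t)-\sum_t H(q_t)=\big[\mathrm{OPT}(\gamma n)-\mathrm{OPT}(\gamma_\varepsilon n)\big]+\big[\mathrm{OPT}(\gamma_\varepsilon n)-\textstyle\sum_t H(q_t)\big].
\]
The second bracket is at most $CR_\varepsilon\log n$ by \eqref{eq:divupper} at the tightened budget. The first bracket, by \Cref{lem:envelope} with $b=\gamma n$ and $b'=\gamma_\varepsilon n$, is at most $\lambda^\ast(\gamma_\varepsilon n)\,\varepsilon n\le\Lambda_\varepsilon\varepsilon n$. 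Adding the two gives (ii).

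For the final claim, I would substitute $\varepsilon n=CR_\varepsilon\log n$, which satisfies the hypothesis of (i) with equality, into (ii) to get $(1+\Lambda_\varepsilon)CR_\varepsilon\log n$. The only delicate point, and the one I expect to require care, is the measurement of $\lambda^\ast$ in \Cref{lem:envelope}. There $\mathrm{OPT}$ is a function of the total budget $b$, so its derivative is the multiplier per nat of total budget. That multiplier coincides with the $\lambda^\ast$ of \eqref{eq:lambdastar}, because the Lagrangian is $\sum_t H(q_t)+\lambda(b+\sum_t\sum_w q_t\log p_t)$, so the scaling is consistent. I would also remark that because $R_\varepsilon$ depends on $\varepsilon$, the fixed point is only asserted when it exists, as the statement says. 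Nothing beyond the earlier results is needed.
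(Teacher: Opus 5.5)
Your proposal is correct and matches the paper's proof essentially step for step. Part (i) is \eqref{eq:violationupper} applied at $\gamma_\varepsilon$, with the $\varepsilon n$ reserve absorbing the slack; part (ii) adds \eqref{eq:divupper} at $\gamma_\varepsilon$ to the envelope bound $\mathrm{OPT}(\gamma n)-\mathrm{OPT}(\gamma_\varepsilon n)\le\lambda^\ast(\gamma_\varepsilon n)\,\varepsilon n\le\Lambda_\varepsilon\varepsilon n$ from \Cref{lem:envelope}, and your point that $C$ should be evaluated at $\Lambda_\varepsilon$ (and the curvature constant at $\gamma_\varepsilon$) rather than $\Lambda$ is fair — the paper's proof passes over it silently.
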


The proof (\Cref{app:exactfeasible}) is short: (i) is \eqref{eq:violationupper}
applied at budget $\gamma_\varepsilon$, whose slack of at most $CR_\varepsilon\log n$
is absorbed by the $\varepsilon n$ nats held in reserve; (ii) combines
\eqref{eq:divupper} at $\gamma_\varepsilon$ with \Cref{lem:envelope}, which prices
the reserve at $\lambda^\ast\le\Lambda_\varepsilon$ nats of entropy per nat of budget.

\begin{remark}[Why the terminal multiplier cannot do this]
\label{rem:terminal}
A tempting alternative is to leave EOS unchanged and fix the final multiplier
$\lambda_n$ against the realized prefix so that the global constraint holds with
equality.  This does not work.  The accumulated slack after $n-1$ rounds is
$\Theta(R\log n)$, which for $R=\Theta(\sqrt n)$ is $\Theta(\sqrt n\log n)$, whereas
a single round can move the cross-entropy sum by at most $\log(1/\rho)=O(1)$.  The
correction must be spread over $\Theta(R\log n)$ rounds --- or, as here, built into
the budget from the start.
\end{remark}

\begin{remark}[Tuning $\varepsilon$ without knowing $R$]
\label{rem:tuning}
By \Cref{rem:Rnotobservable}, $R_\varepsilon$ is not observable online.  Three
options.  (a) In the regime of \Cref{sec:R}, $R_\varepsilon =
O(\sqrt{n(k+\log(n/\delta))})$ with probability $1-\delta$, so
$\varepsilon=\Theta(\sqrt{(k+\log (n/\delta))/n}\,\log n)$ is a valid a priori choice.
(b) Use the observable proxy
$\hat R_t=\max_{i\le t}|\sum_{j\le i}g_j(\lambda_t)|$, which lower-bounds $R$ up to
the estimation error $|\lambda_t-\lambda^\ast|=O(R/(ct))$, and tighten adaptively;
this costs a further constant factor.  (c) A doubling schedule on $\varepsilon$,
restarting when the running slack exceeds the reserve.
\end{remark}

\begin{corollary}[Optimality up to a logarithm]
\label{cor:tight}
Given a bound on $R_\varepsilon$, within the exactly feasible class $\A_0$, the attainable entropy loss is at least
$\kappa R$ (\Cref{thm:lowerbound}) and at most $O(R_\varepsilon\log n)$
(\Cref{thm:exactfeasible}).  The two agree up to a single factor of $\log n$
(and up to the difference between $R$ and $R_\varepsilon$, see
\Cref{rem:Reps}).
\end{corollary}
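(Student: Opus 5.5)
The corollary combines two results already proved, so the plan is mostly bookkeeping with one point to watch. We prove the two inequalities separately, each for the class $\A_0$, and then compare them.

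For the \emph{lower bound}, we invoke \Cref{thm:lowerbound} directly. Under its hypotheses on $\gamma$ and $|\W|$, and for any target $R\le\gamma n/6$, it produces, for every $\A\in\A_0$, an instance with drift parameter $R(n)=R$ on which the entropy loss against $Q^\off$ is at least $\kappa R$. Hence
\[
\inf_{\A\in\A_0}\ \sup_{\{p_t\}:\,R(n)=R}\Big(\sum_t H(Q^\off_t)-\sum_t H(q^\A_t)\Big)\ \ge\ \kappa R .
\]
This is the minimax quantifier order fixed in \Cref{sec:protocol}.

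For the \emph{upper bound}, we exhibit one member of $\A_0$: EOS run at the tightened budget $\gamma_\varepsilon$. The corollary assumes a bound $\bar R\ge R_\varepsilon$ is available. Using that bound, set $\varepsilon:=C\bar R\log n/n$. This choice avoids solving the fixed-point equation of \Cref{thm:exactfeasible}, since it only requires $\varepsilon n\ge CR_\varepsilon\log n$. We also need $\varepsilon<\gamma$ and \Cref{as:rho,as:feasible,as:curvature} to hold at $\gamma_\varepsilon$; both are hypotheses carried over from that theorem. With these in place, \Cref{thm:exactfeasible}(i) shows the algorithm is exactly feasible on every instance in the class, so it lies in $\A_0$. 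Part (ii) then bounds its loss against the offline optimum at the original budget by
\[
CR_\varepsilon\log n+\Lambda_\varepsilon\varepsilon n\le(1+\Lambda_\varepsilon)C\bar R\log n=O(R_\varepsilon\log n)
\]
whenever $\bar R=O(R_\varepsilon)$. The constant $\Lambda_\varepsilon$ stays $O(1)$ because, by \Cref{lem:lambdabound}, $\Lambda_\varepsilon=\Lambda+\varepsilon/c\le\Lambda+\gamma/c$.

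Comparing the two bounds, their ratio is $O(\log n)\cdot R_\varepsilon/R$, which is the claimed agreement up to $\log n$ and up to the $R$ versus $R_\varepsilon$ discrepancy. The main obstacle is that $R$ and $R_\varepsilon$ are drift parameters of two different offline optima, at $\lambda^\ast(\gamma)$ and $\lambda^\ast(\gamma_\varepsilon)$, so they are not literally the same quantity. The statement defers this to \Cref{rem:Reps}, and we would do the same.

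If we wanted to go further, the natural attempt is a perturbation bound. Since each $g_i$ is Lipschitz in $\lambda$ with constant at most $L^2$, where $L=\log(1/\rho)$ (because $g_i'=\Var[\log 1/p_i]\le L^2$), and $|\lambda^\ast(\gamma_\varepsilon)-\lambda^\ast(\gamma)|\le\varepsilon/c$ by \Cref{as:curvature}, we get
\[
|R_\varepsilon-R|\le t\varepsilon(1+L^2/c)=O(\varepsilon n)=O(\bar R\log n).
\]
This bound is too weak to close the gap, which is why the honest statement keeps both quantities separate.
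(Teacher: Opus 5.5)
Your proposal is correct and is essentially the paper's own (implicit) argument: the corollary simply juxtaposes \Cref{thm:lowerbound}, which gives loss at least $\kappa R$ for every $\A\in\A_0$, with \Cref{thm:exactfeasible} for tightened EOS, and defers the $R$ versus $R_\varepsilon$ discrepancy to \Cref{rem:Reps}. Fixing $\varepsilon=C\bar R\log n/n$ from the assumed bound, rather than solving the fixed point, is what ``given a bound on $R_\varepsilon$'' intends (cf.\ option (a) of \Cref{rem:tuning}). Your closing perturbation estimate is not needed, and it would in any case require the curvature bound along the whole segment between $\lambda^\ast(\gamma)$ and $\lambda^\ast(\gamma_\varepsilon)$, as in \Cref{lem:lambdabound}, rather than only at $\lambda^\ast$.
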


\begin{remark}[$R$ versus $R_\varepsilon$]
\label{rem:Reps}
Tightening the budget perturbs the comparator, so \Cref{thm:exactfeasible} is stated
in terms of $R_\varepsilon$.  To first order the perturbation cancels: writing
$A_t(\lambda)=\sum_{i\le t}g_i(\lambda)$, the tightened root satisfies
$\lambda^\ast(\gamma_\varepsilon)-\lambda^\ast(\gamma)\approx \varepsilon/c$, and
\[
A_t^{\varepsilon}\big(\lambda^\ast(\gamma_\varepsilon)\big)
\;\approx\; A_t\big(\lambda^\ast(\gamma)\big) + \frac{\varepsilon}{c}\sum_{i\le t}g_i'-\varepsilon t
\;\approx\;A_t\big(\lambda^\ast(\gamma)\big),
\]
since $\sum_{i\le t}g_i'\approx ct$.  The residual is controlled by the
inhomogeneity of the curvature across the horizon, and vanishes when the $g_i'$ are
exchangeable.  In the regime of \Cref{sec:R} the two are of the same order, since the
bound of \Cref{thm:o(n)} holds uniformly in $\gamma$ over compact subsets of the
feasible range; so nothing is lost in the headline corollary
$\tilde O(\sqrt n)$.  We do not claim $R_\varepsilon=\Theta(R)$ in general.
\end{remark}

\paragraph{When Is the Drift Parameter $R$ Small?}
\label{sec:R}

In the worst case $R=\Theta(n)$ and the guarantees are vacuous, as they must be:
\Cref{thm:lowerbound} exhibits such instances.  The question is whether $R=o(n)$
under conditions resembling real decoding.  We analyze the simplest stationarity
proxy: the $p_t$ are drawn independently from a finite set of $k$ distributions.  We
present this as a first tractable case within our regime of
\Cref{sec:setup}; it does not capture the dependence of $p_t$
on previously sampled tokens.

\begin{lemma}[$\ell_1$ concentration; Bretagnolle--Huber-Carol inequality, see \citet{vandervaart1996weak}]
\label{lem:empiricaltv}
Let $p_t$ be drawn independently with $\Pr(p_t=p^{(i)})=P_i$, $i\in[k]$, and let
$\hat P_i=|\{t\le n: p_t=p^{(i)}\}|/n$.  Then with probability at least $1-\delta$,
\[
\sum_{i\in[k]}\big|\hat P_i-P_i\big|\ \le\ \sqrt{\frac{2\big(k\log 2+\log\tfrac{1}{\delta}\big)}{n}} .
\]
\end{lemma}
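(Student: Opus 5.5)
We deduce the bound from the Bretagnolle--Huber-Carol inequality for the multinomial vector $(n\hat P_1,\dots,n\hat P_k)$. The starting point is the variational identity
\[
\sum_{i\in[k]}|\hat P_i-P_i| \;=\; 2\max_{A\subseteq[k]}\big(\hat P(A)-P(A)\big),
\qquad \hat P(A)=\sum_{i\in A}\hat P_i,\; P(A)=\sum_{i\in A}P_i .
\]
It holds because $\sum_i(\hat P_i-P_i)=0$. The maximum is attained at $A^\ast=\{i:\hat P_i>P_i\}$, so the positive and negative parts of $\hat P-P$ each carry exactly half of the $\ell_1$ mass. The event $\{\sum_i|\hat P_i-P_i|>\eta\}$ is therefore contained in the union, over the $2^k$ subsets $A\subseteq[k]$, of the events $\{\hat P(A)-P(A)>\eta/2\}$.

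For a fixed $A$, the quantity $n\hat P(A)=\sum_{t\le n}\mathbf 1\{p_t\in\{p^{(i)}:i\in A\}\}$ is a sum of $n$ i.i.d.\ Bernoulli$(P(A))$ variables. This uses the independence assumption on the draws $p_t$ and that the $p^{(i)}$ are distinct atoms, so the indicator is well defined. One-sided Hoeffding then gives
\[
\Pr\big(\hat P(A)-P(A)>\eta/2\big)\le e^{-2n(\eta/2)^2}=e^{-n\eta^2/2}.
\]
A union bound over the $2^k$ subsets yields
\[
\Pr\Big(\sum_i|\hat P_i-P_i|>\eta\Big)\le 2^k e^{-n\eta^2/2}.
\]
The two sets $\emptyset$ and $[k]$ contribute zero, so the factor could be sharpened to $2^k-2$, but this is not needed.

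To finish, we set $2^k e^{-n\eta^2/2}=\delta$ and solve, which gives $\eta^2=2(k\log 2+\log(1/\delta))/n$. This is exactly the claimed radius, and the statement holds with probability at least $1-\delta$.

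There is no real obstacle in this argument. The only point requiring care is the factor of $2$ linking $\ell_1$ distance to the subset deviation. Using $\eta/2$ rather than $\eta$ in Hoeffding is precisely what yields the exponent $n\eta^2/2$, and hence the leading constant $2$ inside the square root. Combined with natural logarithms throughout, this matches the stated constants exactly.
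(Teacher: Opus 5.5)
Your proposal is correct and follows the paper's proof essentially verbatim: the $\ell_1$-to-subset identity $\sum_i|\hat P_i-P_i|=2\max_A(\hat P(A)-P(A))$, one-sided Hoeffding for each fixed $A$, and a union bound over the $2^k$ subsets. The only difference is bookkeeping: you apply Hoeffding at threshold $\eta/2$, while the paper bounds the supremum at level $\epsilon$ and then doubles, which gives the same constant.
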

\begin{proof}
Write $\sum_{i}|\hat P_i-P_i|=2\sup_{A\subseteq[k]}\big(\sum_{i\in A}\hat P_i-\sum_{i\in A}P_i\big)$.
For fixed $A$, one-sided Hoeffding gives
$\Pr(\sum_{i\in A}(\hat P_i-P_i)\ge\epsilon)\le e^{-2n\epsilon^2}$; a union bound
over the $2^k$ subsets and solving $2^{k}e^{-2n\epsilon^2}=\delta$ bounds the supremum by
$\sqrt{(k\log 2+\log\frac1\delta)/(2n)}$, and doubling yields the claim.
\end{proof}

\begin{theorem}[$R$ is $\tilde O(\sqrt n)$, uniformly over prefixes]
\label{thm:o(n)}
In the setting of \Cref{lem:empiricaltv}, with probability at least $1-\delta$,
\emph{simultaneously for all $t\in[n]$},
\begin{align*}
R &\;=\; \max_{t\in[n]} R_t
\;\le\; 2\log\tfrac{1}{\rho}\,\sqrt{2n\Big(k\log 2+\log\tfrac{n+1}{\delta}\Big)}\\
&\;=\; O\!\left(\log\tfrac{1}{\rho}\sqrt{n\big(k+\log\tfrac{n}{\delta}\big)}\right).
\end{align*}
Consequently, by \Cref{thm:upperbound,thm:exactfeasible}, the entropy loss of EOS
and of its exactly feasible variant is $\tilde O(\sqrt n)=o(n)$.
\end{theorem}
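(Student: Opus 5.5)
We need to bound $R_t=|\sum_{i\le t} g_i(\lambda^\ast)|$ for $p_t$ drawn i.i.d. from a mixture of $k$ distributions $p^{(1)},\dots,p^{(k)}$. The plan is to reduce $R_t$ to a deviation of empirical type frequencies, and then apply \Cref{lem:empiricaltv} at every prefix with a union bound.

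\textbf{Step 1: reduce to types.} Because each $p_i$ takes one of $k$ values, $g_i(\lambda^\ast)=h_{j}$ whenever $p_i=p^{(j)}$, where $h_j:=\gamma+\sum_w q^{(j),\lambda^\ast}(w)\log p^{(j)}(w)$. By \Cref{as:rho}, $-\sum_w q(w)\log p(w)\in[0,\log(1/\rho)]$ for every $q$, and \Cref{as:feasible} gives $\gamma\le\log(1/\rho)$. Hence $|h_j-\gamma|\le\log(1/\rho)$ and $h_j\in[\gamma-L,\gamma]$ with $L=\log(1/\rho)$. Write $\hat P^{(t)}_j$ for the empirical frequency of type $j$ among the first $t$ rounds. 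Then
\[
\sum_{i\le t}g_i(\lambda^\ast)=t\sum_j \hat P^{(t)}_j h_j .
\]
The defining equation \eqref{eq:lambdastar} says that $\sum_j \hat P^{(n)}_j h_j=0$.

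\textbf{Step 2: compare each prefix with the whole horizon.} Using that identity,
\[
\sum_{i\le t}g_i(\lambda^\ast)=t\sum_j(\hat P^{(t)}_j-\hat P^{(n)}_j)h_j .
\]
We may subtract any constant $m$ from all $h_j$, since $\sum_j(\hat P^{(t)}_j-\hat P^{(n)}_j)=0$. Choosing $m$ as the midpoint of $[\gamma-L,\gamma]$ gives
\[
R_t\le t\cdot\tfrac{L}{2}\,\|\hat P^{(t)}-\hat P^{(n)}\|_1\le \tfrac{tL}{2}\big(\|\hat P^{(t)}-P\|_1+\|\hat P^{(n)}-P\|_1\big).
\]
This centering through the true $P$ is the crucial move. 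The comparator $\lambda^\ast$ is itself random, since it depends on $\hat P^{(n)}$. The representation above avoids conditioning on $\lambda^\ast$ altogether, because the bound holds for any vector $h$ with entries in an interval of length $L$. The randomness of $\lambda^\ast$ therefore never needs to be controlled.

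\textbf{Step 3: uniform concentration.} Apply \Cref{lem:empiricaltv} to each prefix length $s\in[n]$, with confidence $\delta/(n+1)$ for each. A union bound over these $n$ events gives, with probability at least $1-\delta$ and simultaneously for all $s$,
\[
\|\hat P^{(s)}-P\|_1\le\sqrt{2(k\log2+\log\tfrac{n+1}{\delta})/s}.
\]
Substituting into Step 2 yields
\[
R_t\le \tfrac{L}{2}\Big(\sqrt{t}+t/\sqrt{n}\Big)\sqrt{2(k\log 2+\log\tfrac{n+1}{\delta})}\le L\sqrt{2n(k\log2+\log\tfrac{n+1}{\delta})}.
\]
This is within the claimed bound, which carries the looser constant $2L$. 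If the midpoint centering is dropped and one bounds $|h_j|\le L$ directly, the argument still gives exactly the stated $2L$. Taking the maximum over $t$ bounds $R$.

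The consequence for EOS follows by plugging this bound into \Cref{thm:upperbound,thm:exactfeasible}. That step needs \Cref{as:curvature} to hold on the sampled instance, which we take as given. For the exactly feasible variant we also need the analogous bound on $R_\varepsilon$ at $\gamma_\varepsilon$, which holds because the argument above is uniform in $\gamma$ (cf.\ \Cref{rem:Reps}).

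The main obstacle is the dependence of $\lambda^\ast$ on the whole sample. Step 2 removes it by bounding over all $h$ in a box, so this is the step to verify carefully.
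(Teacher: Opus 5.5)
Your proof is correct and follows the paper's argument. The key identity $\sum_j \hat P^{(n)}_j h_j=0$ turns $R_t$ into $t\,\bigl|\sum_j(\hat P^{(t)}_j-\hat P^{(n)}_j)h_j\bigr|$, the triangle inequality routes through the true $P$, and \Cref{lem:empiricaltv} is applied at every prefix length under a union bound; your midpoint centering and reuse of the $s=n$ event for $\hat P^{(n)}$ are minor sharpenings (constant $L$ instead of $2L$, and $n$ rather than $n+1$ applications), not a different route.
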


\section{Experiments}
\label{sec:experiments}

We validate three predictions: (i) EOS stays close to the offline optimum and
above budget-feasible temperatures fixed in advance; (ii) the
curvature condition \Cref{as:curvature}; (iii) that EOS tracks
$\lambda^\ast$.  The comparison in (i) is motivated by \Cref{thm:offline}: a
fixed temperature is optimal only if it equals $1/\lambda^\ast$, which depends
on the whole horizon, so a temperature chosen in advance either exceeds the
budget or leaves budget unspent, which by \Cref{lem:envelope} costs entropy at
rate $\lambda^\ast$.

We use ELI5~\citep{fan2019eli5} and Alpaca~\citep{alpaca} with Meta-Llama-3-8B-Instruct~\citep{grattafiori2024llama},
$\gamma=0.9$ and a probability cutoff $\rho=10^{-4}$ (realizing
\Cref{as:rho}).  The fixed baseline is $T=0.7$, a common default that spends
well below the budget (e.g.\ $0.35$ nats per token in the autoregressive run
below), so it is feasible but leaves budget unspent.  On both datasets EOS
tracks the offline optimum with a small gap and exceeds $T=0.7$ in entropy
(\Cref{fig1}), $\lambda_t$ converges to $\lambda^\ast$, and the cumulative
variance grows linearly (Figure~\ref{fig::eli5_variance}, Appendix~\ref{app:exp}).

\paragraph{Autoregressive regime.}
We also run \Cref{alg:eos} unchanged on genuine autoregressive generation,
which the analysis does not cover: 200 ELI5 questions, $n=1024$ tokens,
against a fixed-temperature grid and standard truncation samplers, compared at
matched realized cross-entropy (\Cref{app:exp}).  EOS stays close to the
fixed-temperature entropy frontier, losing $0.006$, $0.037$ and $0.088$ nats
   per token at $\gamma=0.5,0.9,1.3$, which reflects the small loss to the
   offline comparator that \Cref{thm:upperbound} allows.  Yet it is markedly less repetitive at the
same cross-entropy (\Cref{fig:dep}).  At essentially equal cross-entropy
($0.50$), EOS with $\gamma=0.5$ repeats $2.0\%$ of 16-word blocks (with at
most $10\%$ edits) against $3.4\%$ for nucleus sampling with $p=0.95$.
Against the fixed-temperature frontier at $\gamma=0.9$, seq-rep-4 falls from
$0.068$ to $0.043$, repeated 16-word blocks from $1.6\%$ to $0.9\%$,
near-duplicate sentences from $3.4\%$ to $2.1\%$, and cross-sample 4-gram
overlap from $0.019$ to $0.011$, with 95\% intervals excluding zero at all
   budgets and at every edit tolerance (Figure~\ref{fig:dep-extra},
   Appendix~\ref{app:exp}).  Under feedback, plain EOS overshoots its budget:
   its realized cross-entropy is $1.01$ at $\gamma=0.9$ and $1.53$ at
   $\gamma=1.3$.  The tightened variant corrects the mean, with a fifth of the
   per-prompt spread of a calibrated fixed temperature (Table~\ref{tab:budget},
   Appendix~\ref{app:exp}).
\begin{figure}[H]
  \centering
  \includegraphics[width=0.48\linewidth]{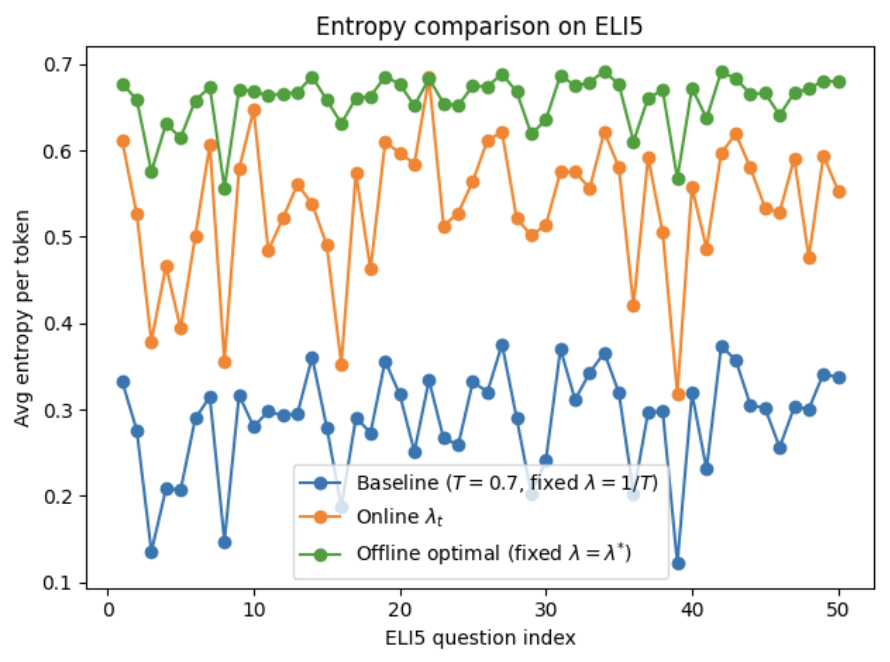}\hfill
  \includegraphics[width=0.48\linewidth]{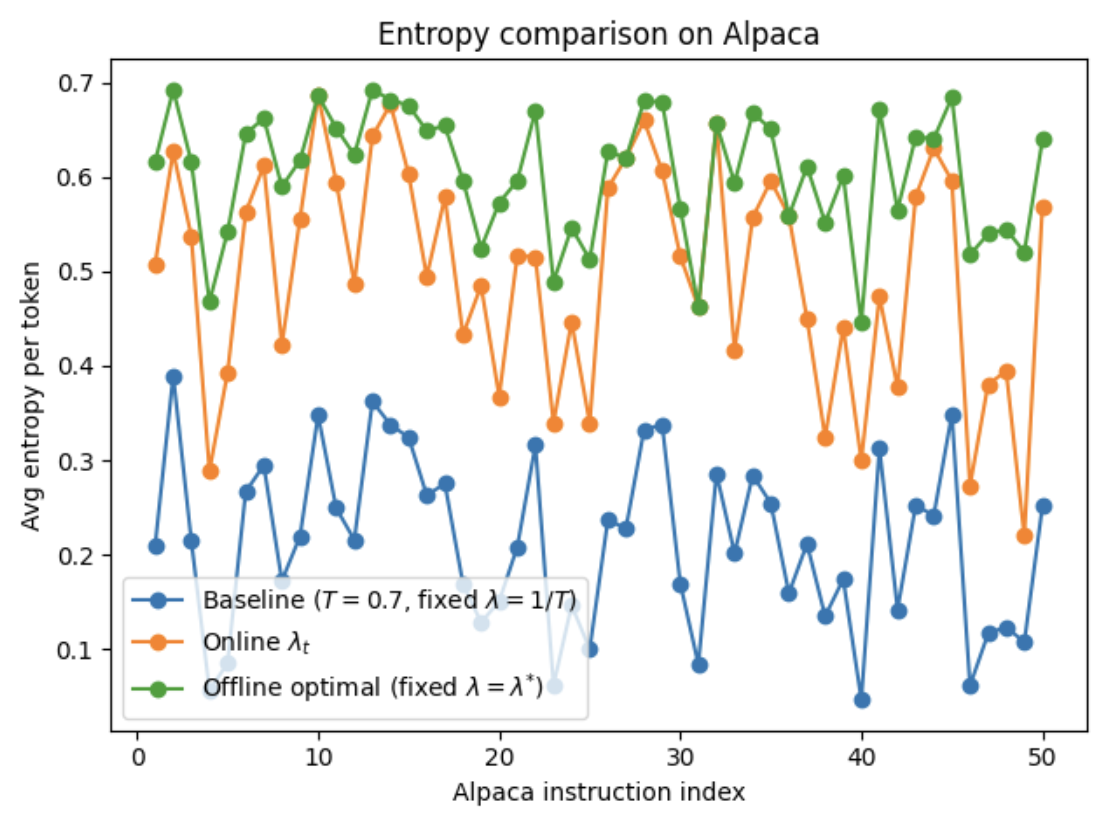}
  \caption{Exogenous regime: per-token entropy on ELI5 (left) and Alpaca
  (right).  The offline optimum (green) is highest; EOS (orange) tracks it with
  a small gap, as \Cref{thm:offline,thm:upperbound} predict; the fixed
  temperature $T=0.7$ (blue) is budget-feasible but leaves budget unspent.}
  \label{fig1}
\end{figure}

\begin{figure}[H]
  \centering
  \includegraphics[width=0.70\linewidth]{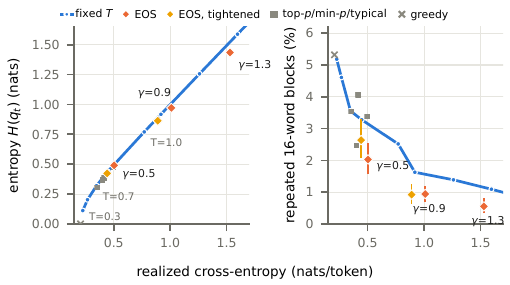}
  \caption{Autoregressive regime (Llama-3-8B-Instruct, ELI5).  Entropy
  (left) and repeated 16-word blocks with $\le10\%$ edits (right) against
  realized cross-entropy; the curve is fixed temperature
  ($T\in[0.3,1.15]$).  Bars are 95\% bootstrap intervals over prompts.}
  \label{fig:dep}
\end{figure}

\section{Concluding Remarks}

We formulated decoding as online entropy maximization under a single
cross-entropy budget coupled across the horizon.  EOS attains $O(R\log n)$
entropy loss with $O(R\log n)$ slack, tightening its budget makes it exactly
feasible (given a drift bound) at the same order, and no exactly feasible online algorithm can lose
$o(R)$; in the i.i.d.-mixture regime $R=\tilde O(\sqrt n)$.  Empirically, EOS
also reduces repetition under autoregressive feedback, which our
analysis does not cover.  Open directions are closing the $\log n$ gap, extending the
analysis to autoregressive feedback and its future-partition-function tilt
(\Cref{sec:setup}), and selecting $\gamma$ from a downstream
\emph{pass@$k$}~\citep{chen2021evaluating} target.
\bibliography{references_main,ml,token_extra}

\clearpage
\appendix
\thispagestyle{empty}
\onecolumn

\section{Basic Facts}
\label{app:facts}

\begin{proposition}
\label{prop:derivative}
Let \Cref{as:rho} hold and let $f_t,g_t$ be as in \eqref{eq:ftgt}.  Then, with
$L=\log(1/\rho)$,
\[
g_t'(\lambda)=\Var_{q^{\lambda}_t}\!\left[\log\tfrac{1}{p_t(w)}\right]\in[0,L^2],
\qquad
f_t'(\lambda)=-\lambda\,g_t'(\lambda)\in[-\Lambda L^2,\,0],
\qquad
|g_t''(\lambda)|\ \le\ L^{3}.
\]
\end{proposition}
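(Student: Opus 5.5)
The plan is to treat $q^\lambda_t$ as a one-parameter exponential family and read all three claims off derivatives of its log-partition function. Fix $t$ and set $X(w)=\log(1/p_t(w))$. By \Cref{as:rho}, and since $p_t(w)\le 1$, we have $X(w)\in[0,L]$ for every $w\in\W$. Then
\[
q^\lambda_t(w)=\frac{e^{-\lambda X(w)}}{Z(\lambda)},\qquad A(\lambda):=\log Z(\lambda)=\log\sum_{w\in\W}e^{-\lambda X(w)} .
\]
Because $\W$ is finite, $A$ is smooth, and we may differentiate under the sum.

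First I would record the cumulant identities. Writing $\mu(\lambda)=\E_{q^\lambda_t}[X]$, they are
\[
A'(\lambda)=-\mu(\lambda),\qquad A''(\lambda)=\Var_{q^\lambda_t}[X],\qquad A'''(\lambda)=-\E_{q^\lambda_t}\big[(X-\mu)^3\big].
\]
All three follow by differentiating $Z$ and $\log Z$ directly.

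Next come the two derivative formulas. By \eqref{eq:ftgt}, $g_t(\lambda)=\gamma-\mu(\lambda)=\gamma+A'(\lambda)$. Hence $g_t'(\lambda)=A''(\lambda)=\Var_{q^\lambda_t}[X]$, and $g_t''(\lambda)=A'''(\lambda)$.

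For the entropy, $-\log q^\lambda_t(w)=\lambda X(w)+A(\lambda)$, so $f_t(\lambda)=\lambda\mu(\lambda)+A(\lambda)$. Differentiating gives
\[
f_t'(\lambda)=\mu+\lambda\mu'(\lambda)+A'(\lambda)=\lambda\mu'(\lambda)=-\lambda\,\Var_{q^\lambda_t}[X]=-\lambda g_t'(\lambda).
\]
This is the envelope-type cancellation of $\mu$ against $A'$.

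Finally I would prove the bounds. Since $X\in[0,L]$, the variance is at most $L^2/4\le L^2$ by Popoviciu's inequality, and it is nonnegative. For the third central moment, $|X-\mu|\le L$ pointwise, which gives $|g_t''|\le\E|X-\mu|^3\le L^3$. For $f_t'$, I use that the relevant range is $\lambda\in[0,\Lambda]$, the interval to which \Cref{alg:eos} clips $\lambda$ and which contains $\lambda^\ast$ by \Cref{lem:lambdabound}. On that range $0\le\lambda g_t'\le\Lambda L^2$, so $f_t'\in[-\Lambda L^2,0]$.

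The only point needing care is this last step. The $f_t'$ bound holds only on $[0,\Lambda]$, so the statement should be read as restricted to that interval. Everything else is routine calculus on a finite sum.
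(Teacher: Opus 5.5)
Your proposal is correct and follows essentially the paper's own argument. The paper likewise treats $q^\lambda_t$ as an exponential family with log-partition $\psi_t(\lambda)=\log\sum_w p_t^\lambda(w)$ (your $A$), reads off $g_t'$ and $g_t''$ as its second and third derivatives, and obtains $f_t'=-\lambda g_t'$ from $H(q^\lambda_t)=\psi_t-\lambda\psi_t'$; it also states the $f_t'$ bound only for $\lambda\in[0,\Lambda]$, exactly as you do.
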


\begin{proof}
Since $q^\lambda_t(w)\propto e^{\lambda\log p_t(w)}$ is a one-parameter exponential
family with natural parameter $\lambda$ and sufficient statistic $\log p_t(w)$, the
cumulant generating function $\psi_t(\lambda)=\log\sum_w p_t^\lambda(w)$ satisfies
$\psi_t'=\E_{q^\lambda_t}[\log p_t]$ and $\psi_t''=\Var_{q^\lambda_t}[\log p_t]$.  As
$g_t(\lambda)=\gamma+\psi_t'(\lambda)$, we get $g_t'=\psi_t''=\Var[\log(1/p_t)]\ge0$;
the range bound follows since $\log(1/p_t)\in[0,L]$ under \Cref{as:rho}, so the
variance is at most $L^2/4\le L^2$.  Also
$g_t''=\psi_t'''=\E[(\log p_t-\E\log p_t)^3]$, bounded in absolute value by $L^3$.
For $f_t$, $H(q^\lambda_t)=\psi_t(\lambda)-\lambda\psi_t'(\lambda)$, whence
$f_t'(\lambda)=-\lambda\psi_t''(\lambda)=-\lambda g_t'(\lambda)$, which lies in
$[-\Lambda L^2,0]$ for $\lambda\in[0,\Lambda]$.
\end{proof}

\begin{proof}[Proof of \Cref{lem:lambdabound}]
Fix $t$ and put $A_t(\lambda)=\sum_{i\le t}g_i(\lambda)$.  Nonnegativity of
$\lambda_t$ and $\lambda^\ast$ is the right-hand inequality of \Cref{as:feasible}.
Under the additional hypothesis, \Cref{prop:derivative} gives $A_t'\ge ct$ on
$[0,\Lambda]$.  At $\lambda=0$,
$g_i(0)=\gamma-\frac{1}{|\W|}\sum_w\log\frac{1}{p_i(w)}\ge \gamma-L$, so
$A_t(\Lambda)\ge A_t(0)+ct\Lambda\ge -t(L-\gamma)+t(L-\gamma)=0$; since $A_t$ is
nondecreasing, its root lies in $[0,\Lambda]$.  The same argument with $t=n$ bounds
$\lambda^\ast$.
\end{proof}

\begin{proof}[Proof of \Cref{lem:envelope}]
$\mathrm{OPT}(b)$ is the value of a concave maximization over a convex set whose
constraint is relaxed linearly in $b$; concavity and monotonicity are immediate, and
the derivative equals the optimal multiplier by standard Lagrangian sensitivity
(the multiplier is unique under \Cref{as:curvature}, so $\mathrm{OPT}$ is
differentiable).  The displayed inequality is concavity plus $\mathrm{OPT}'\ge0$.
\end{proof}

\section{Proof of Theorem~\ref{thm:probabilityexplanation}}
\label{app:prob}

\begin{proof}
Let $X_t=\log p_t(w_t)+\gamma$ and $D_t=X_t-\E_{q_t}[X_t\mid w^{t-1}]$, so that
$S_t=\sum_{i\le t}D_i$ is a martingale.  Under \Cref{as:rho}, $\log p_t(w_t)$ takes
values in $[-L,0]$, so $D_t$ lies in an interval of length at most $L$.  By
Azuma--Hoeffding for martingales with bounded ranges,
\[
\Pr\big(|S_n|\ge\epsilon\big)\ \le\ 2\exp\!\left(-\frac{2\epsilon^2}{nL^2}\right),
\]
so with probability at least $1-\delta$, $|S_n|\le L\sqrt{(n/2)\log(2/\delta)}$.
Since
\[
\log P(w^n)+\gamma n=\sum_{t\in[n]}X_t
=\sum_{t\in[n]}\sum_{w}q_t(w)\log p_t(w)+\gamma n+S_n ,
\]
part (i) follows.  For (ii), take expectations: on the event of probability
$1-\delta$ we have $\log P(w^n)\ge -\gamma n-\Delta_n$, and on its complement
$\log P(w^n)\ge -nL$ deterministically, so
$\E[\log P(w^n)]\ge -\gamma n-\Delta_n-\delta nL$.  Since $\E[S_n]=0$,
$\E[\log P(w^n)]=\E\big[\sum_t\sum_w q_t(w)\log p_t(w)\big]$.  Since the $p_t$ are
exogenous, the $q_t$ do not depend on $w^{t-1}$, so the right side is the left side of
(ii), which gives the claim (the $L\sqrt{\cdot}$ term is not even needed).
\end{proof}

\section{Proof of Theorem~\ref{thm:offline}}
\label{app:offline}

\begin{proof}
The objective $\sum_t H(q_t)$ is concave and the constraint is linear, so
\eqref{eq:offline} is convex; \Cref{as:feasible} gives a Slater point, so strong
duality and the KKT conditions hold.  With multipliers $\lambda\ge0$ for the budget
and $\mu_t$ for normalization, the Lagrangian is
\[
\mathcal{L}=\sum_{t}\mu_t\Big(1-\sum_w q_t(w)\Big)-\sum_t H(q_t)
-\lambda\Big(\sum_t\sum_w q_t(w)\log p_t(w)+\gamma n\Big),
\]
and $\partial\mathcal{L}/\partial q_t(w)=\log q_t(w)+1-\lambda\log p_t(w)-\mu_t$.
Setting this to zero gives $q_t(w)\propto p_t^{\lambda}(w)$, i.e.\
$q_t=q^\lambda_t$.  Note that $\lambda$ does not depend on $t$: there is a single
constraint, hence a single multiplier, hence a single temperature.  Complementary
slackness and \Cref{as:feasible} force the constraint to bind, which is
\eqref{eq:lambdastar}; uniqueness follows from strict monotonicity of
$\lambda\mapsto\sum_t g_t(\lambda)$ under \Cref{as:curvature}
(\Cref{prop:derivative}).
\end{proof}

\section{Proof of Theorem~\ref{thm:upperbound}}
\label{app:upperbound}

\begin{proof}
Write $A_t(\lambda)=\sum_{i\le t}g_i(\lambda)$, let $\tilde\lambda_t$ be a root of
$A_t$, so that $\lambda_t=\Pi_{[0,\Lambda]}(\tilde\lambda_t)$ by \eqref{eq:lambdat} and
the clipping in \Cref{alg:eos}, and note $|A_t(\lambda^\ast)|\le R$ by \Cref{def:R}.

\textbf{Step 1: $\lambda_t$ tracks $\lambda^\ast$.}
By Taylor's theorem with Lagrange remainder,
\begin{equation}
\label{eq:taylor}
A_t(\lambda)=A_t(\lambda^\ast)+\Big(\sum_{i\le t}g_i'(\lambda^\ast)\Big)(\lambda-\lambda^\ast)+L_t(\lambda),
\qquad
|L_t(\lambda)|\le \tfrac{1}{2}\,t\,L^{3}\,(\lambda-\lambda^\ast)^2,
\end{equation}
using $|g_i''|\le L^3$ from \Cref{prop:derivative}.
Take
$\lambda=\lambda^\ast+\tfrac{2R}{ct}$.  By \Cref{as:curvature},
$\sum_{i\le t}g_i'(\lambda^\ast)\ge ct$, hence
\[
A_t\Big(\lambda^\ast+\tfrac{2R}{ct}\Big)\ \ge\ -R+2R-\frac{2R^2L^3}{c^2t}\ >\ 0
\qquad\text{whenever } t\ \ge\ T_0:=\Big\lceil \tfrac{8RL^{3}}{c^{2}}\Big\rceil .
\]
Symmetrically $A_t(\lambda^\ast-\tfrac{2R}{ct})<0$ for $t\ge T_0$.  Since $A_t$ is
nondecreasing, a root $\tilde\lambda_t$ lies in $[\lambda^\ast-\tfrac{2R}{ct},\lambda^\ast+\tfrac{2R}{ct}]$,
and since $\lambda^\ast\in[0,\Lambda]$ the projection onto $[0,\Lambda]$ does not
increase the distance to $\lambda^\ast$, so
\begin{equation}
\label{eq:track}
\big|\lambda_t-\lambda^\ast\big|\ \le\ \frac{2R}{ct}\qquad\text{for all } t\ge T_0 .
\end{equation}

\textbf{Step 2: entropy loss.}  Split the horizon at $T_0$.  For $t<T_0$ bound each
term crudely by the maximum possible entropy difference, $\log|\W|$; there are
$T_0=O(R)$ such terms, contributing $O(R\log|\W|)$.  For $t\ge T_0$, by
\Cref{prop:derivative} and a second-order expansion of $f_t$,
\[
f_t(\lambda^\ast)-f_t(\lambda_t)
\ \le\ |f_t'(\lambda^\ast)|\,|\lambda_t-\lambda^\ast|+\tfrac12\max_\lambda|f_t''(\lambda)|\,|\lambda_t-\lambda^\ast|^2,
\]
with $|f_t'|\le \Lambda L^2$ and $|f_t''|=|g_t'+\lambda g_t''|\le L^2+\Lambda L^3$.
Using \eqref{eq:track},
\[
\sum_{t\ge T_0}\Lambda L^{2}\frac{2R}{ct}\;=\;O(R\log n),
\qquad
\sum_{t\ge T_0}\tfrac12(L^{2}+\Lambda L^{3})\frac{4R^{2}}{c^{2}t^{2}}
\;\le\;O\!\Big(\frac{R^{2}}{T_0}\Big)\;=\;O(R),
\]
where the second sum uses $\sum_{t\ge T_0}t^{-2}\le 1/(T_0-1)$ and $T_0=\Theta(R)$.
Adding the three contributions gives \eqref{eq:divupper}.

\textbf{Step 3: global slack.}  The slack is $-\sum_{t}g_t(\lambda_t)$.  For
$t<T_0$, $|g_t|\le L$, contributing $O(R)$.  For $t\ge T_0$, expand $g_t$ around
$\lambda^\ast$ and use $\sum_t g_t(\lambda^\ast)$ bounded by $R$ in absolute value
together with $|g_t'|\le L^2$, $|g_t''|\le L^3$ and \eqref{eq:track}:
\[
\Big|\sum_{t\ge T_0}g_t(\lambda_t)\Big|
\ \le\ R+\sum_{t\ge T_0}L^{2}\frac{2R}{ct}+\sum_{t\ge T_0}\tfrac12 L^{3}\frac{4R^{2}}{c^{2}t^{2}}
\ =\ O(R\log n),
\]
which is \eqref{eq:violationupper}.
\end{proof}

\section{Proof of Theorem~\ref{thm:exactfeasible}}
\label{app:exactfeasible}

\begin{proof}
(i) \Cref{thm:upperbound} applied at budget $\gamma_\varepsilon$ gives
\[
-\sum_t\sum_w q_t(w)\log p_t(w)\ \le\ \gamma_\varepsilon n + C R_\varepsilon\log n
\ =\ \gamma n-\varepsilon n + C R_\varepsilon\log n\ \le\ \gamma n
\]
whenever $\varepsilon n\ge C R_\varepsilon\log n$.

(ii) By \Cref{thm:upperbound} at budget $\gamma_\varepsilon$, the entropy of the
output is at least $\mathrm{OPT}(\gamma_\varepsilon n)-CR_\varepsilon\log n$.  By
\Cref{lem:envelope} with $b=\gamma n$ and $b'=\gamma_\varepsilon n$,
\[
\mathrm{OPT}(\gamma n)-\mathrm{OPT}(\gamma_\varepsilon n)\ \le\ \lambda^\ast(\gamma_\varepsilon n)\,\varepsilon n\ \le\ \Lambda_\varepsilon\varepsilon n
\]
using \Cref{lem:lambdabound}.  Adding the two displays gives the claim.  Setting
$\varepsilon=CR_\varepsilon\log n/n$ satisfies the hypothesis of (i) with equality
and makes the second term $\Lambda_\varepsilon C R_\varepsilon\log n$.
\end{proof}

\section{Proof of the Lower Bound, Theorem~\ref{thm:lowerbound}}
\label{app:lowerbound}

W.l.o.g.\ let $\W=[|\W|]$ and let $M$ be the smallest integer with
$-\frac{(M-1)\log\frac{1-e^{-\gamma/3}}{M-1}}{M}+\frac{\gamma}{3M}>\frac{4\gamma}{3}$
and $\log M\ge \frac{4\gamma}{3}$.  For $\tfrac1M\le x\le 1$ set
\[
d_x(w)=\begin{cases} x,& w=M\\ \frac{1-x}{M-1},& w\in[M-1],\end{cases}
\qquad
C(\lambda,x)=-\sum_{w}\frac{d^\lambda_x(w)}{\sum_{w'}d^\lambda_x(w')}\log d_x(w).
\]
Then $C(0,e^{-\gamma/3})>\tfrac{4\gamma}{3}$ and
$C(\infty,e^{-\gamma/3})\le\tfrac{\gamma}{3}<\tfrac{2\gamma}{3}$, so by continuity
there are $\lambda_1,\lambda_2\ge0$ with
\begin{equation}
\label{eq:d1}
C(\lambda_1,e^{-\gamma/3})=\tfrac{4\gamma}{3},
\qquad
C(\lambda_2,e^{-\gamma/3})=\tfrac{2\gamma}{3}.
\end{equation}
Since $C(\lambda,\tfrac1M)=\log M\ge\tfrac{4\gamma}{3}$ and $C(\lambda,1)=0$, there
are $x_1,x_2\in[\tfrac1M,1]$ with
\begin{equation}
\label{eq:d2d3}
C(\lambda_1,x_1)=\tfrac{2\gamma}{3},\qquad C(\lambda_2,x_2)=\tfrac{4\gamma}{3},
\end{equation}
and $x_3,x_4$ with $C(\lambda_1,x_3)=C(\lambda_2,x_4)=\gamma$.  Define
\begin{equation}
\label{eq:pt1pt2}
p_{t,1}=\begin{cases} d_{e^{-\gamma/3}}& t\le \tfrac{3R}{\gamma}\\
d_{x_1}& \tfrac{3R}{\gamma}<t\le\tfrac{6R}{\gamma}\\ d_{x_3}&\text{else}\end{cases}
\qquad
p_{t,2}=\begin{cases} d_{e^{-\gamma/3}}& t\le \tfrac{3R}{\gamma}\\
d_{x_2}& \tfrac{3R}{\gamma}<t\le\tfrac{6R}{\gamma}\\ d_{x_4}&\text{else.}\end{cases}
\end{equation}
A direct computation from \eqref{eq:d1}--\eqref{eq:pt1pt2} gives
$\lambda^\ast_\ell=\lambda_\ell$ for $\ell\in\{1,2\}$ and
$$\max_{t}|\sum_{i\le t}\sum_w Q^{\off}_i(w)\log p_i(w)+\gamma t|=R$$ on both
instances, so both have drift parameter exactly $R$.  Taking $\W=[M]$ (the remaining
symbols removed by the pre-filtering behind \Cref{as:rho}), both instances satisfy
\Cref{as:rho,as:feasible,as:curvature} with $\rho,c$ depending only on $(\gamma,M)$,
not on $R$ or $n$: every $p_{t,\ell}$ is one of five fixed full-support distributions
$d_x$, and each round's variance under $q^{\lambda_\ell}_t$ is positive.

The two instances agree on $t\le 3R/\gamma=:\alpha$, so any online $\A$ produces the
same $q^\A_t$ there.  Writing $B:=\sum_{t\le\alpha}\sum_w q^\A_t(w)\log p_{t,1}(w)+2R$,
so $R+B$ is the prefix value of $\A$ in the constraint of $S$ below, while the offline
prefix value is $-R$ on instance $1$ and $+R$ on instance $2$.  We treat $B\ge R/2$ on
instance $2$; the cases $B\le -R/2$ and $|B|<R/2$ are handled at the end.

Let $S(\{p_i\}_{i\le t},x)$ be the value of
$\max\sum_{i\le t}H(q_i)$ subject to $\sum_{i\le t}\sum_w q_i(w)\log p_i(w)+\gamma t\ge x$,
attained by $q^{\lambda^\ast_x}_i$ where $\lambda^\ast_x$ makes the constraint bind.
Since $\A\in\A_0$ is globally feasible, its tail value is at least $-R-B$, and $S$ is
nonincreasing in $x$, so
\[
\sum_{t=1}^n H(q^\A_t)\ \le\ S(\{p_i\}_{i\le\alpha},R+B)+S(\{p_i\}_{i>\alpha},-R-B).
\]
With $\lambda^\ast_R=\lambda^\ast_{-R}=\lambda_2$ (the offline optimum of instance $2$
has a single multiplier, and its prefix and tail values are $R$ and $-R$) and $f_t'=-\lambda g_t'$, the same chain of identities as in the
main text yields
\begin{equation}
\label{eq:lowerboundmid}
\sum_{t=1}^n H(Q^{\off}_t)-\sum_{t=1}^n H(q^{\A}_t)
\ \ge\ \sum_{i\le \alpha}\int_{\lambda^\ast_R}^{\lambda^\ast_{R+B}}(\lambda-\lambda^\ast_R)\,g_i'(\lambda)\,d\lambda .
\end{equation}

\paragraph{A variance lower bound.}  On the common prefix, $p_i=d_{e^{-\gamma/3}}$,
so $\log(1/p_i(w))$ takes only the two values
$a=\gamma/3$ (at $w=M$) and $b=\log\frac{M-1}{1-e^{-\gamma/3}}\ge\log(M-1)\ge
\tfrac{4\gamma}{3}$.  Let $P=q^\lambda_i(M)$ and
$\mu=\E_{q^\lambda_i}[\log(1/p_i)]=Pa+(1-P)b$.  For
$\lambda^\ast_R\le\lambda\le\lambda^\ast_{3R/2}$ one has
$\mu\in[\tfrac{\gamma}{2},\tfrac{2\gamma}{3}]$ (this is the definition of
$\lambda^\ast_{3R/2}$ together with monotonicity of $g$).  Then
\[
1-P=\frac{\mu-a}{b-a}\ \ge\ \frac{\gamma/6}{b-a},
\qquad
P=\frac{b-\mu}{b-a}\ \ge\ \frac{2\gamma/3}{b-a},
\]
so that
\begin{equation}
\label{eq:varlb}
g_i'(\lambda)=\Var_{q^\lambda_i}\!\left[\log\tfrac{1}{p_i}\right]=P(1-P)(b-a)^2
\ \ge\ \frac{2\gamma}{3}\cdot\frac{\gamma}{6}\ =\ \frac{\gamma^{2}}{9}.
\end{equation}

\paragraph{The gap in $\lambda$.}  Let $\Delta:=\lambda^\ast_{3R/2}-\lambda^\ast_R$.
By definition of these roots,
\[
\int_{\lambda^\ast_R}^{\lambda^\ast_{3R/2}}\sum_{i\le\alpha}g_i'(\lambda)\,d\lambda\;=\;\frac{R}{2},
\]
and $\sum_{i\le\alpha}g_i'(\lambda)\le \alpha G$ with
$G=(\tfrac{4\gamma}{3}+\log\tfrac{1}{1-e^{-\gamma/3}})^2$ an upper bound on each
$g_i'$.  Since $\alpha=3R/\gamma$,
\begin{equation}
\label{eq:lowerlambdadiff}
\Delta\ \ge\ \frac{R/2}{(3R/\gamma)G}\ =\ \frac{\gamma}{6G}.
\end{equation}
Note that $\Delta=\Theta(1)$, not $\Theta(R)$: the prefix has $\Theta(R)$ rounds, each
contributing curvature, so a constraint shift of $R/2$ moves the multiplier by only a
constant.

\paragraph{Conclusion.}  Combining \eqref{eq:lowerboundmid}, \eqref{eq:varlb} and
\eqref{eq:lowerlambdadiff}, and using $\lambda^\ast_{R+B}\ge\lambda^\ast_{3R/2}$
(since $B\ge R/2$),
\[
\sum_{t=1}^n H(Q^{\off}_t)-\sum_{t=1}^n H(q^{\A}_t)
\ \ge\ \alpha\cdot\frac{\gamma^{2}}{9}\int_{0}^{\Delta}u\,du
\ =\ \frac{3R}{\gamma}\cdot\frac{\gamma^{2}}{9}\cdot\frac{\Delta^{2}}{2}
\ \ge\ \frac{R\gamma}{6}\cdot\frac{\gamma^{2}}{36G^{2}}
\ =\ \frac{\gamma^{3}}{216\,G^{2}}\;R ,
\]
which is \eqref{eq:divlower}.

\paragraph{Remaining cases.}  Write $y\mapsto\lambda(y)$ for the prefix root at value
$y$ (nondecreasing, $\lambda(-R)=\lambda_1$, $\lambda(R)=\lambda_2$); the substitution
$dy=\sum_{i\le\alpha}g_i'(\lambda)\,d\lambda$ turns \eqref{eq:lowerboundmid} into
$\int_{y^{\off}}^{y^{\A}}\big(\lambda(y)-\lambda(y^{\off})\big)\,dy$.
(a) $B\le-R/2$, instance $2$: on $[\lambda(R/2),\lambda(R)]$ one has
$\mu\in[\tfrac{2\gamma}{3},\tfrac{5\gamma}{6}]$, so the two-point computation gives
$g_i'\ge\tfrac{\gamma}{3}\cdot\tfrac{\gamma}{2}\ge\tfrac{\gamma^2}{9}$, and
$\lambda(R)-\lambda(R/2)\ge\gamma/(6G)$ as in \eqref{eq:lowerlambdadiff}; the
conclusion is identical.
(b) $|B|<R/2$, instance $1$: here $y^{\A}=R+B>R/2$ and $y^{\off}=-R$, so the loss is at
least $\int_0^{R/2}\big(\lambda(y)-\lambda(-R)\big)dy\ge\tfrac R2\big(\lambda(0)-\lambda(-R)\big)
\ge\tfrac R2\cdot\tfrac{R}{\alpha G}=\tfrac{\gamma R}{6G}\ge\kappa R$,
using $\sum_{i\le\alpha}g_i'\le\alpha G$ and $36G\ge\gamma^2$.
\qed

\section{Proof of Theorem~\ref{thm:pareto}}
\label{app:pareto}

\begin{proof}
Run the construction of \Cref{app:lowerbound} verbatim.  The only place exact
feasibility was used is the decomposition
$\sum_t H(q^\A_t)\le S(\{p_i\}_{i\le\alpha},R+B)+S(\{p_i\}_{i>\alpha},-R-B)$, which
encodes that a prefix overspend of $B$ must be repaid in the tail.  If $\A$ is
merely $V$-feasible, the tail budget is loosened by $V$ and the correct
decomposition is
\[
\sum_{t=1}^n H(q^\A_t)\ \le\ S(\{p_i\}_{i\le\alpha},R+B)+S(\{p_i\}_{i>\alpha},-R-B-V).
\]
The value function $S(\cdot,x)$ is nonincreasing in $x$ with
$\partial S/\partial x=-\lambda^\ast_x$ (\Cref{lem:envelope} applied to the tail
subproblem), and $\lambda^\ast_x\le\Lambda$ by the hypothesis of \Cref{thm:pareto}, so
\[
S(\{p_i\}_{i>\alpha},-R-B-V)-S(\{p_i\}_{i>\alpha},-R-B)\ \le\ \Lambda V .
\]
Subtracting this from the bound of \Cref{thm:lowerbound} gives \eqref{eq:pareto}.
\end{proof}

\section{Proof of Theorem~\ref{thm:o(n)}}

\begin{proof}
Let $g^{(i)}(\lambda)=\gamma+\sum_w q^{\lambda,(i)}(w)\log p^{(i)}(w)$ be the constraint
function of component $i$, let $t_i=|\{j\le t: p_j=p^{(i)}\}|$, and let
$\hat P_i=n_i/n$.  By \Cref{as:rho}, $|g^{(i)}(\lambda)|\le \log(1/\rho)=:L$ for
every $\lambda\ge0$.

\textbf{The key identity.}  By definition of $\lambda^\ast$
(\Cref{thm:offline}), $\sum_{t\le n}g_t(\lambda^\ast)=0$; grouping the rounds by
component,
\begin{equation}
\label{eq:keyidentity}
\sum_{i\in[k]}n\hat P_i\,g^{(i)}(\lambda^\ast)\;=\;0 .
\end{equation}

\textbf{Consequence.}  For every $t$,
\[
R_t=\Big|\sum_{i\in[k]}t_i\,g^{(i)}(\lambda^\ast)\Big|
=\Big|\sum_{i\in[k]}\big(t_i-t\hat P_i\big)g^{(i)}(\lambda^\ast)\Big|
\ \le\ L\sum_{i\in[k]}\big|t_i-t\hat P_i\big|,
\]
where the second equality uses \eqref{eq:keyidentity} scaled by $t/n$.  By the
triangle inequality,
\[
\sum_{i}\big|t_i-t\hat P_i\big|\ \le\ \sum_i\big|t_i-tP_i\big|+t\sum_i\big|\hat P_i-P_i\big| .
\]

\textbf{Uniform concentration.}  Apply \Cref{lem:empiricaltv} at each sample size
$t\in[n]$ with confidence $\delta/(n+1)$, and once more at $t=n$ for
$\sum_i|\hat P_i-P_i|$.  On the intersection event, which has probability at least
$1-\delta$ by a union bound over the $n+1$ applications, we have simultaneously for
all $t$
\[
\sum_i|t_i-tP_i|\ \le\ \sqrt{2t\Big(k\log2+\log\tfrac{n+1}{\delta}\Big)}
\quad\text{and}\quad
t\sum_i|\hat P_i-P_i|\ \le\ t\sqrt{\tfrac{2(k\log2+\log\frac{n+1}{\delta})}{n}} .
\]
Both are at most $\sqrt{2n(k\log2+\log\frac{n+1}{\delta})}$ for $t\le n$, so
\[
R=\max_{t\in[n]}R_t\ \le\ 2L\sqrt{2n\Big(k\log2+\log\tfrac{n+1}{\delta}\Big)} ,
\]
as claimed.
\end{proof}

\begin{remark}
The union bound over prefixes is what converts a statement about the empirical
frequencies at a single sample size into the bound on $R=\max_t R_t$ that the
theorem actually needs; it costs $\log(1/\delta)\to\log(n/\delta)$ and preserves the
$\tilde O(\sqrt n)$ rate.
\end{remark}

\section{Experimental Details}
\label{app:exp}

\paragraph{Curvature.}
\Cref{fig::eli5_variance} shows the cumulative variance under the offline
temperature in the exogenous ELI5 experiment.

\begin{figure}[h]
  \centering
  \includegraphics[width=0.6\linewidth]{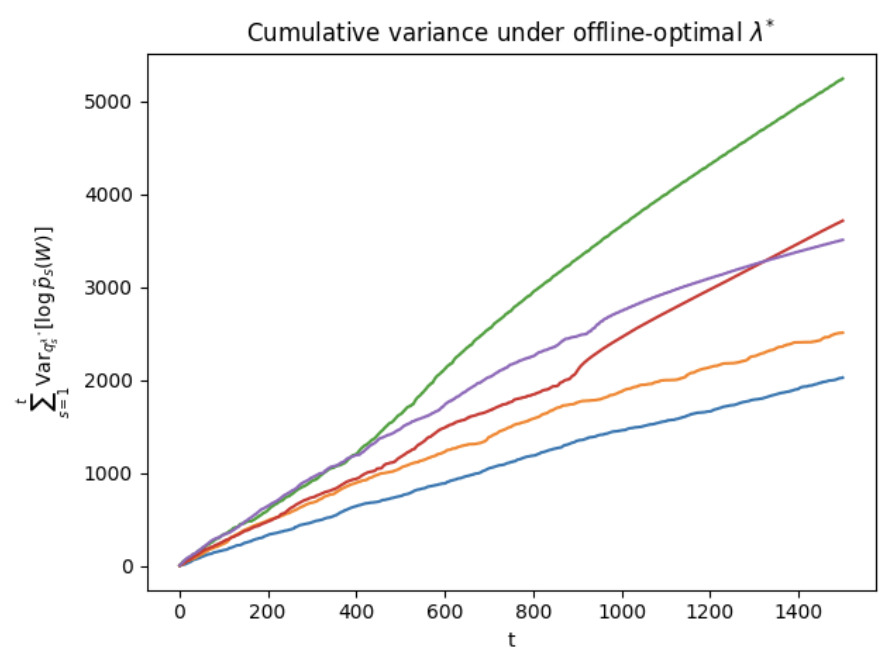}
\caption{Cumulative variance $\sum_{i\le t}\Var_{q_i^{\lambda^\ast}}[\log(1/p_i)]$
  against $t$.  Approximate linearity is direct evidence for
  \Cref{as:curvature}; $c$ may be read off as the slope.}
  \label{fig::eli5_variance}
\end{figure}

\paragraph{Autoregressive setup.}
Meta-Llama-3-8B-Instruct (bf16) generates $n=1024$ tokens for 200 ELI5
questions, 4 samples per prompt and method, on one AMD MI210 GPU per run, with end-of-sequence tokens
suppressed so every method is compared on the same horizon.  The support at
each step is $\W_t=\{w:p_t(w)\ge\rho\}$, $\rho=10^{-4}$, capped at the 256
most likely tokens, and $\Lambda=50$.  $\lambda_t$ solves \eqref{eq:lambdat}
to tolerance $10^{-6}t$ by Newton's method warm-started at $\lambda_{t-1}$,
using that $\lambda\mapsto\sum_{i\le t}g_i(\lambda)$ has exact derivative
$\sum_{i\le t}\Var_{q_i^\lambda}[\log(1/p_i)]$ (so the left-hand side of
\eqref{eq:lambdat} has derivative $-\sum_{i\le t}\Var_{q_i^\lambda}[\log(1/p_i)]$),
with a bisection safeguard (about two evaluations of the prefix sum per token).
Methods: EOS with $\gamma\in\{0.5,0.9,1.3\}$; its tightened variant
(\Cref{thm:exactfeasible}) with $\varepsilon=0.1\gamma$; a re-solving
heuristic that spreads the remaining budget
$\gamma n-\sum_{i<t}\mathrm{CE}_i$ evenly over the remaining rounds (not
covered by our theory); fixed temperature
$T\in\{0.3,0.5,0.7,0.85,1.0,1.03,1.06,1.09,1.12,1.15,1.3\}$; nucleus
$p\in\{0.9,0.95\}$~\citep{holtzman2020curious}; min-$p\in\{0.05,0.1\}$;
locally typical sampling ($0.9$); and greedy decoding.  For every method we
log $H(q_t)$ and $\mathrm{CE}_t=-\sum_w q_t(w)\log p_t(w)$ against the full
model distribution, from the raw logits, so the realized slack is on the same
scale for all methods.

\begin{figure}[H]
  \centering
  \includegraphics[width=0.8\linewidth]{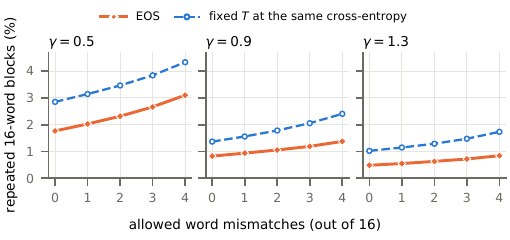}\\[6pt]
  \includegraphics[width=0.8\linewidth]{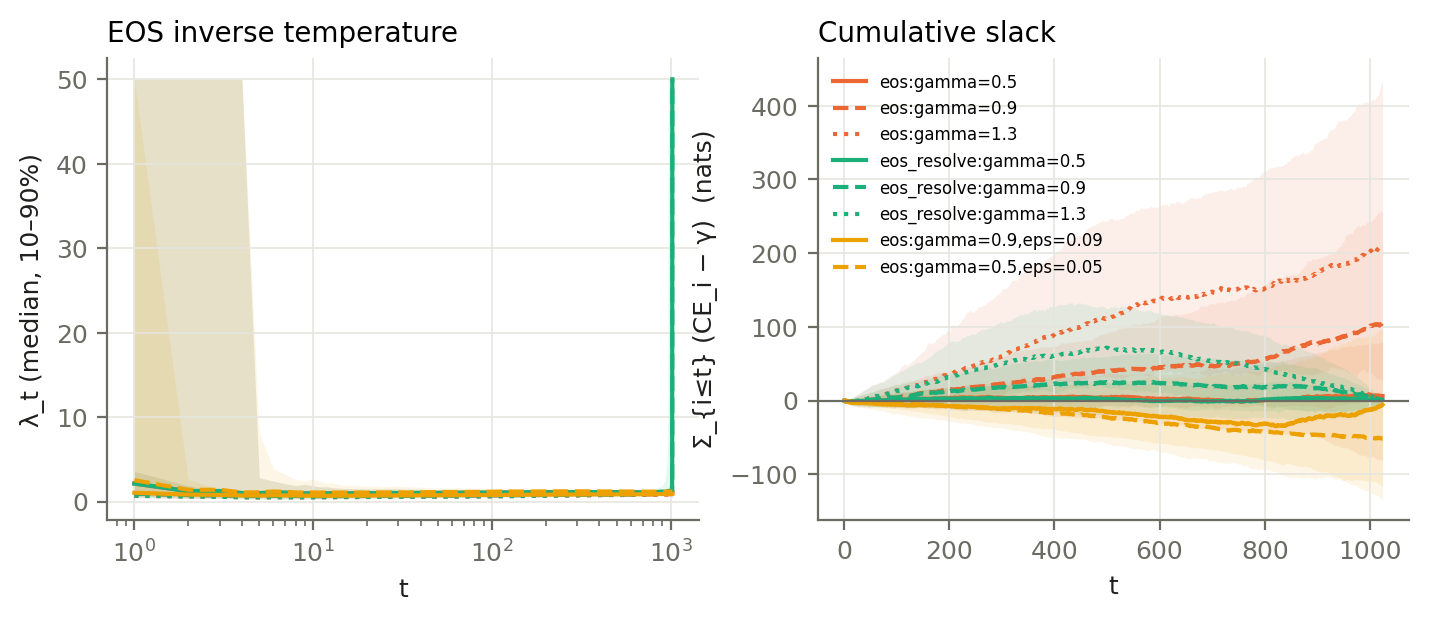}
  \caption{Top: repeated 16-word blocks as the number of allowed word
  mismatches grows, for EOS and for fixed temperature at the same realized
  cross-entropy; other samplers are compared at a single tolerance in
  \Cref{fig:dep}.  Bottom: EOS inverse temperature $\lambda_t$ and cumulative
  overshoot $\sum_{i\le t}(\mathrm{CE}_i-\gamma)$ (median, 10--90\% band).}
  \label{fig:dep-extra}
\end{figure}

\paragraph{Repetition metrics.}
Approximate repetition has no single definition, so we report a family of
metrics indexed by a tolerance, each computed on model tokens and on
normalized words (lower-cased, digits collapsed, punctuation removed).
(i) \emph{Exact recurrence}: $\Lambda_t$, the length of the longest string
starting at $t$ that already occurs earlier~\citep{wynerziv1989,ornsteinweiss1993};
\Cref{prop:recurrence} predicts $\Lambda_t\approx\log t/H$, and we report the
entropy-rate estimate $\hat H=[\mathrm{mean}_t\,(\Lambda_t+1)/\log t]^{-1}$,
   averaged over positions $2\le t\le0.95n$ whose match is not truncated by the end of the output.
(ii) \emph{Approximate recurrence}: for $m\in\{8,16,32\}$ and
$\tau\in[0,0.3]$, the fraction of positions whose length-$m$ block is within
Hamming distance $\tau m$ of an earlier block; $\tau=0$ is exact repetition.
(iii) \emph{Loops and $n$-grams}: fraction of the output in tandem repeats
(period $\le64$, at least three periods), seq-rep-$n$ and
rep-$\ell$~\citep{welleck2020neural}, and LZ78 and zstd compression ratios.
(iv) \emph{Sentence level}: fraction of sentences whose normalized edit
similarity to an earlier sentence exceeds $\theta=0.8$.
Diversity across the samples of a prompt (pairwise 4-gram Jaccard overlap) is
reported separately.    The headline approximate-repetition metric uses $m=16$ and $\tau=0.1$, i.e.\
   repeated 16-word blocks with at most one mismatched word.

\paragraph{Comparison protocol.}
Entropy and repetition both fall as the budget tightens, so all comparisons
are at matched realized cross-entropy: (a) the \emph{frontier}, every method
in the plane of realized cross-entropy against a metric, with the fixed-$T$
curve; (b) the \emph{offline comparator}, the fixed-$T$ curve interpolated at
each prompt's own realized cross-entropy, whose gap to EOS estimates the
entropy loss of \Cref{thm:upperbound}; (c) \emph{budget adherence}, a single
temperature calibrated to meet $\gamma$ on average over a held-out 25\% of
prompts, evaluated on the rest.  Intervals are 95\% bootstrap intervals over
prompts (2000 resamples).

\paragraph{Additional results.}
At $\gamma=0.9$ the match-length entropy-rate estimate rises from $2.63$
(fixed temperature at the same cross-entropy) to $2.83$ nats per word, and
LZ78 entropy from $3.71$ to $3.78$, so the reduction in repetition is also
visible in compression-based measures.  The reduction holds at every edit
tolerance, from exact repeats to blocks differing in a quarter of their words
(\Cref{fig:dep-extra}, top).  Nucleus and locally typical sampling are no
less repetitive than fixed temperature at matched cross-entropy, and min-$p$
($0.05$) only slightly.  The default $T=0.7$ spends $0.35$ nats per token
(entropy $0.32$), leaving most of the $\gamma=0.9$ budget unspent.  The
overshoot of plain EOS grows with the budget (realized cross-entropy $0.50$,
$1.01$, $1.53$ at $\gamma=0.5,0.9,1.3$); the tightened variant and the
re-solving heuristic correct it (\Cref{tab:budget}).  A single temperature
calibrated to $\gamma$ has a larger mean excess than plain EOS, because its
per-prompt cross-entropy varies almost five times as much.

\begin{table}[h]
  \centering\small
  \caption{Per-prompt budget adherence on 150 held-out prompts, $\gamma=0.9$.
  CE is realized cross-entropy (nats/token); sd is across prompts; excess is
  $\frac{1}{|\mathcal P|}\sum_{p}[\mathrm{CE}_p-\gamma]_+$ (prompts within
  budget count as zero).}
  \label{tab:budget}
  \begin{tabular}{lccccc}
    & mean CE & sd & \% over & excess & $H$ \\\hline
    fixed $T{=}1.03$ (calibrated) & 0.982 & 0.332 & 49 & 0.166 & 0.981 \\
    EOS                           & 1.012 & 0.071 & 95 & 0.112 & 0.973 \\
    EOS, tightened                & 0.890 & 0.068 & 39 & 0.021 & 0.863 \\
    re-solving (heuristic)        & 0.900 & 0.010 & 53 & 0.001 & 0.859
  \end{tabular}
\end{table}

\end{document}